\newcommand{\ip}[2]{\langle#1,#2\rangle}
\newtheorem{lemma}{Lemma}
\newtheorem{fact}{Fact}
\newtheorem{remark}{Remark}
\newtheorem{theorem}{Theorem}
\newtheorem{proposition}{Proposition}
\newtheorem{definition}{Definition}
\title{Convex Formulations for Training \\ Two-Layer ReLU Neural Networks}
\author{Karthik Prakhya\textsuperscript{*}, Tolga Birdal\textsuperscript{\textdagger} \& Alp Yurtsever\textsuperscript{*}\\
\textsuperscript{*} Department of Mathematics and Mathematical Statistics, Ume\r{a} University, Sweden\\
\textsuperscript{\textdagger} Department of Computing, Imperial College London, United Kingdom \\
}
\crefname{equation}{Eq.}{Eqs.}
\def\eqref#1{equation~\ref{#1}}
\def\1{\bm{1}}
\DeclareMathAlphabet{\mathsfit}{\encodingdefault}{\sfdefault}{m}{sl}
\SetMathAlphabet{\mathsfit}{bold}{\encodingdefault}{\sfdefault}{bx}{n}
\newcommand{\R}{\mathbb{R}}
\renewcommand{\paragraph}[1]{{\vspace{0.6mm}\noindent \bf #1}.}
\begin{document}
\maketitle

\begin{abstract}
Solving non-convex, NP-hard optimization problems is crucial for training machine learning models, including neural networks. However, non-convexity often leads to black-box machine learning models with unclear inner workings. While convex formulations have been used for verifying neural network robustness, their application to training neural networks remains less explored. In response to this challenge, we reformulate the problem of training infinite-width two-layer ReLU networks as a convex completely positive program in a finite-dimensional (lifted) space. Despite the convexity, solving this problem remains NP-hard due to the complete positivity constraint. To overcome this challenge, we introduce a semidefinite relaxation that can be solved in polynomial time. We then experimentally evaluate the tightness of this relaxation, demonstrating its competitive performance in test accuracy across a range of classification tasks.
\end{abstract} %
\vspace{-3mm}\section{Introduction}\vspace{-1mm}
\label{sec:introduction}
The outstanding performance of deep neural networks has driven significant changes in the research directions of optimization for machine learning over the past decade, replacing traditional convex optimization techniques with non-convex methods. 
Nonetheless, non-convexity introduces significant challenges to the analysis, resulting in the use of models that lack a comprehensive explanation of their inner workings. %
While convex formulations have been studied for enhancing reliability through applications like estimating the Lipschitz constant of neural networks \citep{fazlyab2019efficient,chen2020semialgebraic,latorre2020lipschitz,pauli2023lipschitz} or verifying their robustness \citep{raghunathan2018semidefinite,fazlyab2020safety,zhang2020tightness,lan2022tight,chiu2023tight}, their application to the training of neural networks remains less explored. 

In this paper, we study convex optimization representations for training a two-layer neural network with rectifier linear unit (ReLU) activations and a sufficiently wide hidden layer. 
While the relationships between classical matrix factorization problems and two-layer linear neural networks, as well as those between matrix factorization and semidefinite programming, are well-established, our understanding of the connections between neural networks and convex optimization becomes less clear when non-linear activation functions are involved. 
Our paper advances these connections by establishing links between convex optimization and neural networks with ReLU activations.

In light of this brief introduction, we summarize our key contributions:\vspace{-1mm}
\begin{list}{{\color{blue!80}\tiny$\blacksquare$}}{\leftmargin=2.5em}\vspace*{-0.5em}
  \setlength{\itemsep}{0pt}
\item We present a novel copositive program for training a two-layer ReLU neural network, providing an exact reformulation of the training problem when the network is sufficiently wide. Specifically, for fixed input and output dimensions and a given number of datapoints, there exists a finite critical width beyond which the network’s expressivity saturates, making the training problem equivalent to the proposed copositive formulation. We further extend this equivalence to the more general infinite-width regime, where the neural network is represented as a Lebesgue integral over a probability measure on the weights.
\item Although copositive programs are convex, solving them is NP-hard \citep{bomze2000copositive}. To mitigate this obstacle, we propose a semidefinite programming relaxation of the original formulation. We then numerically evaluate its tightness on two synthetic examples. We also assess its performance on real-data classification tasks, where, combined with a rounding heuristic, our approach achieves competitive test accuracy compared to Neural Network Gaussian Process (NNGP) \citep{lee2017deep} and Neural Tangent Kernel (NTK) \citep{jacot2018neural} methods. 
\end{list}

We make our implementation available under {\href{https://github.com/KarthikPrakhya/SDPNN-IW}{https://github.com/KarthikPrakhya/SDPNN-IW}}.

\section{Background}
\begin{definition}[Positive semidefinite cone]
A symmetric matrix $\bm{W} \in \mathbb{R}^{n \times n}$ is said to be positive semidefinite if $\bm{u}^\top \bm{W} \bm{u} \geq 0$ for all $\bm{u} \in \mathbb{R}^n$.
The set of all positive semidefinite matrices forms a self-dual convex cone, known as the positive semidefinite cone, which can be defined by 
\begin{equation}
    \mathcal{PSD}_n := \mathrm{conv} \{\bm{w}\bm{w}^\top: \bm{w} \in \mathbb{R}^n\},
\end{equation}
where $\mathrm{conv}$ represents the convex hull. 
We omit the subscript when the size is clear from the context. 
\end{definition}

\paragraph{Semidefinite programming} (SDP) is a powerful framework in convex optimization focused on minimization of a convex objective over the positive semidefinite cone. 
SDPs can be solved in polynomial time under mild technical assumptions using interior-point methods \citep{vandenberghe1996semidefinite}.  
We refer to \citep{majumdar2020recent,yurtsever2021scalable} for an overview of recent advances in more scalable SDP solvers. 

SDPs have been applied in neural networks for various tasks (often as a convex relaxation), including the estimation of Lipschitz constant \citep{chen2020semialgebraic, fazlyab2019efficient, latorre2020lipschitz, pauli2023lipschitz}, verification of neural networks \citep{raghunathan2018semidefinite, zhang2020tightness, chiu2023tight, lan2022tight, fazlyab2020safety}, and for stability guarantees \citep{pauli2022neural, pauli2021training, revay2020convex, yin2021imitation}. 
A recent line of work, initiated by \citet{pilanci2020neural}, investigates the use of SDPs for training neural networks.
We provide a detailed discussion and comparison with these approaches in \Cref{sec:related-work}.

\begin{definition}[Copositive cone]
A matrix $\bm{W} \in \mathbb{R}^{n \times n}$ is said to be copositive if $\bm{u}^\top \bm{W} \bm{u} \geq 0$ for all $\bm{u} \in \mathbb{R}^n_+$. 
The cone of copositive matrices is called the copositive cone  ($\mathcal{COP}$). 
\end{definition}

\begin{definition}[Completely positive cone]
The dual of copositive cone is known as the completely positive cone, defined as 
\begin{equation}
    \mathcal{CP}_{\!n} := \mathrm{conv} \{\bm{w}\bm{w}^\top: \bm{w} \in \mathbb{R}^n_+\}.
\end{equation}
It is easy to see that completely positive cone is a subset of positive semidefinite cone, and positive semidefinite cone is a subset of copositive cone, as shown in~\Cref{fig:cop}. 
\end{definition}

\setlength\intextsep{0pt}
\begin{wrapfigure}[9]{R}{0.25\textwidth}
            \vspace{-5mm}
            \begin{center}
    \includegraphics[width=0.25\textwidth]{./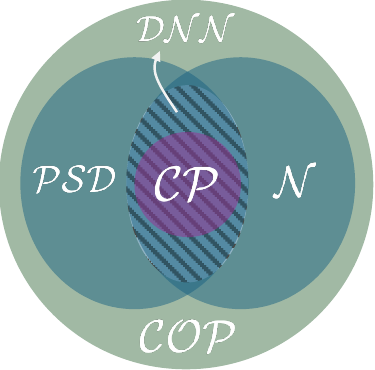}
            \end{center}
            \vspace{-10pt}
            \caption{Cones.}
            \label{fig:cop}
\end{wrapfigure} 
\noindent\textbf{Copositive programming} (CP) is a subfield of convex optimization concerned with minimization of a convex objective function over copositive or completely positive matrices. 
Despite its convexity, solving a CP is NP-Hard \citep{bomze2000copositive}. 
Numerous NP-Hard problems are formulated as a CP, including the binary quadratic problems \citep{burer2009copositive}, problems of finding stability and chromatic numbers of a graph \citep{de2002approximation,dukanovic2010copositive}, 3-partitioning problem \citep{povh2007copositive}, and the quadratic assignment problem \citep{povh2009copositive}. 
We refer to the excellent surveys \citep{dur2010copositive,dur2021conic} and references therein for more examples. 

Recent research has broadened the scope of CP into data science. 
For example, CP formulations have been utilized in machine learning for verifying neural networks \citep{brown2022unified}, and problems such as graph matching and permutation synchronization in computer vision \citep{yurtsever2022q}. 
CP formulations have also been proposed for training two-layer ReLU networks, though under specific data-related assumptions \citep{sahiner2020vector}. 

\begin{definition}[Non-negative cone]
The non-negative cone $\mathcal{N}$ comprises of entry-wise non-negative matrices.
\end{definition}

\begin{definition}[Doubly non-negative cone]
     The doubly non-negative cone is the set of matrices that are both positive semidefinite and elementwise non-negative, defined by $\mathcal{DNN} := \mathcal{PSD} \cap \mathcal{N}.$
     It is an important subset of the positive semidefinite cone that contains the cone of completely positive matrices and is frequently used in relaxations of CP formulations. %
\end{definition}

\begin{definition}[Rank]
For a matrix $\bm{W} \in \mathcal{PSD}_n$, the \textbf{rank} is the minimum number of vectors $\bm{w}_i \in \mathbb{R}^n$ required to express $\bm{W}$ as a convex combination of $\bm{w}_i \bm{w}_i^\top$. 
\end{definition}

\begin{definition}[CP-rank]
We refer to the rank restricted to decompositions involving non-negative vectors as the CP-rank: 
The \textbf{CP-rank} of a matrix $\bm{W} \in \mathcal{CP}_{\!n}$ is the minimum number of \textbf{non-negative vectors} $\bm{w}_i \in \mathbb{R}_+^n$ that are needed to express $\bm{W}$ as a convex combination of $\bm{w}_i \bm{w}_i^\top$. 
\end{definition}

It is well known that the rank of any $n \times n$  matrix $\bm{W}$ cannot exceed $n$. 
The following result from \citep[Theorem~3.1]{barioli2003maximal} characterizes the maximal CP-rank. %

\begin{lemma}
\label{lem:cp-rank}
    Let $\bm{W}$ be a completely positive matrix of rank $r \geq 2$. 
    Then, the CP-rank of $\bm{W}$ cannot exceed ${r(r+1)}/{2} - 1$. 
    As an immediate corollary, the CP-rank of any $n \times n$ completely positive matrix for $n \geq 2$ is bounded above by ${n(n+1)}/{2}-1$, which we denote as the maximal CP-rank. 
\end{lemma}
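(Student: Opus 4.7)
The plan is to reduce the problem to representing the $r \times r$ identity matrix in a restricted cone, and then to apply Carath\'eodory's theorem in the space of symmetric matrices. Since $\bm{W}$ is positive semidefinite of rank $r$, factor $\bm{W} = \bm{B}\bm{B}^{\top}$ with $\bm{B} \in \mathbb{R}^{n \times r}$ of full column rank. For any CP-decomposition $\bm{W} = \sum_{i=1}^{k} \bm{w}_i \bm{w}_i^{\top}$ with $\bm{w}_i \in \mathbb{R}^n_+$, the bound $\bm{w}_i \bm{w}_i^{\top} \preceq \bm{W}$ in the positive semidefinite order forces $\bm{w}_i \in \mathrm{range}(\bm{B})$. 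Writing $\bm{w}_i = \bm{B}\bm{x}_i$ and using that $\bm{B}$ is injective on its column space, the decomposition collapses to
\[
\sum_{i=1}^{k} \bm{x}_i \bm{x}_i^{\top} = \bm{I}_r, \qquad \bm{B}\bm{x}_i \geq 0 \text{ for all } i,
\]
where $\bm{I}_r$ denotes the $r \times r$ identity matrix.

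The CP-rank of $\bm{W}$ is therefore the minimum number $k$ of rank-one terms $\bm{x}_i\bm{x}_i^{\top}$ needed to sum to $\bm{I}_r$ with each $\bm{x}_i$ in the polyhedral cone $K := \{\bm{x} \in \mathbb{R}^r : \bm{B}\bm{x} \geq 0\}$. To bound this number, I apply Carath\'eodory's theorem inside the vector space of symmetric $r \times r$ matrices, which has dimension $\tfrac{r(r+1)}{2}$: every element of the convex cone generated by $\{\bm{x}\bm{x}^{\top} : \bm{x} \in K\}$ is a conic combination of at most $\tfrac{r(r+1)}{2}$ extreme rays, giving the preliminary bound $k \leq \tfrac{r(r+1)}{2}$.

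To sharpen the bound by one, I argue by contradiction. Suppose a minimum decomposition uses exactly $N := \tfrac{r(r+1)}{2}$ terms; then the matrices $\bm{x}_i\bm{x}_i^{\top}$ must be linearly independent, since otherwise the standard Carath\'eodory reduction would remove one. Consequently they form a basis of the symmetric matrix space, and coefficient-only perturbations are useless because they change the sum. The strategy is then to perturb the vectors $\bm{x}_i$ themselves along a smooth curve preserving the identity $\sum_i \bm{x}_i\bm{x}_i^{\top} = \bm{I}_r$ and along which one vector shrinks to zero; a dimension count in the parametrized decomposition map $(\bm{x}_1, \ldots, \bm{x}_N) \mapsto \sum_i \bm{x}_i\bm{x}_i^{\top}$ shows that the kernel of its differential is large enough when $r \geq 2$ to accommodate such a direction. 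Continuity together with the non-empty interior of $K$ (which also requires $r \geq 2$) ensures that a small initial segment of the curve stays feasible, yielding a decomposition with $N-1$ terms and contradicting minimality.

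The main obstacle is precisely this last step: producing the valid perturbation while respecting the polyhedral constraints $\bm{B}\bm{x}_i \geq 0$. The reduction to an identity decomposition and the generic Carath\'eodory bound of $\tfrac{r(r+1)}{2}$ are routine, but the extra $-1$ is the delicate core of the argument — it is where the rank hypothesis $r \geq 2$ enters essentially, and a complete proof would need to carry out the geometric refinement of Barioli and Berman in the cited reference.
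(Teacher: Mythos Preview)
The paper does not prove this lemma; it is quoted from \citep[Theorem~3.1]{barioli2003maximal} and used as a black box. Your reduction to decomposing $\bm{I}_r$ over the polyhedral cone $K=\{\bm{x}\in\mathbb{R}^r:\bm{B}\bm{x}\geq 0\}$ and the Carath\'eodory bound $r(r+1)/2$ are both correct and match the standard route to this result.

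The genuine gap is the final $-1$. A large kernel for the differential of $(\bm{x}_1,\dots,\bm{x}_N)\mapsto\sum_i\bm{x}_i\bm{x}_i^\top$ yields a local family of decompositions, but a dimension count alone cannot produce a curve along which one $\bm{x}_i$ actually reaches zero --- that is a global conclusion requiring an explicit construction, not an infinitesimal one. Even granting such a curve, ``continuity together with the non-empty interior of $K$'' only secures feasibility on a short initial segment; it says nothing about remaining in $K^N$ all the way until some $\bm{x}_i$ vanishes, and the $\bm{x}_i$ may lie on $\partial K$ from the outset. (Incidentally, $K$ is full-dimensional because $\sum_i\bm{x}_i\bm{x}_i^\top=\bm{I}_r$ forces the $\bm{x}_i$ to span $\mathbb{R}^r$, not because $r\geq 2$.) You rightly flag that the $-1$ is the delicate core and defer to the cited reference; as written, the perturbation sketch is not a proof of that step.
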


\section{CP Formulation for Training Two-Layer ReLU Networks} \label{sec:wide-width-neural-network-training}
We now present our CP formulation for training two-layer ReLU networks. We start by the \emph{wide} regime with finite width before moving onto the infinite case that carries theoretical importance.

\begin{definition} [Two-Layer ReLU Network]
A two layer neural network $\psi: \mathbb{R}^d \rightarrow \mathbb{R}^c$ with $m$ hidden neurons and ReLU activations is given by %
\begin{equation} \label{eqn:relu-network}
\tag{\textsc{Nn}}
    \psi(\bm{x}) = \sum_{j = 1}^m (\bm{x}^\top \bm{u}_j)_+ \bm{v}_j^\top,
\end{equation}
where the function $(\cdot)_+ = \max(0, \cdot)$ denotes the ReLU activation, $\{\bm{u}_j \in \mathbb{R}^d\}_{j = 1}^m$ are the weights for the first layer and $\{\bm{v}_j \in \mathbb{R}^c\}_{j = 1}^m$ are the second layer weights. 
\end{definition} 

Assuming that the data matrix $\bm{X} \in \mathbb{R}^{n \times d}$ and labels $\bm{Y} \in \mathbb{R}^{n \times c}$ are given, and using the mean squared error loss and Tikhonov regularization (also known as weight-decay), the problem for training this neural network can be formulated as follows:
\begin{equation}\label{eqn:problem-setup}
\tag{\textsc{Nn}-Train}
    \min_{\substack{\bm{u}_j \in \mathbb{R}^d \\ \bm{v}_j \in \mathbb{R}^c}} ~  \Big \| \sum_{j = 1}^m (\bm{X}\bm{u}_j)_+ \bm{v}_j^\top - \bm{Y} \Big \|_F^2 + \frac{\gamma}{2} \sum_{j = 1}^m (\|\bm{u}_j \|_2^2 + \| \bm{v}_j\|_2^2), 
\end{equation}
where $\gamma \geq 0$ is the regularization parameter. 

In the following, our main result is an equivalence between the above training problem and the following convex optimization problem with $\mathcal{PSD}$ and $\mathcal{CP}$ constraints when $m$ is sufficiently large: 
\begin{theorem}[CP Formulation]\label{thm:main-1}
    Consider the two-layer ReLU neural network in \cref{eqn:relu-network}. 
    For any fixed input dimension $d$, output dimension $c$, and number of data points $n$, there exists a finite critical width $R \leq \max\{p, 2n^2 + n - 1\}$, where $p=c+d+2n$, such that, for $m \geq R$ hidden neurons, the training problem (\textnormal{\ref{eqn:problem-setup}}) is equivalent to the convex optimization problem (\ref{eqn:cp-nn}), meaning that they have identical optimal values: 
    \begin{equation} \label{eqn:cp-nn}
    \tag{\textsc{Cp}-\textsc{Nn}}
        \begin{aligned}
            & \min_{ \bm{\Lambda} \in \mathbb{R}^{p \times p}} \,~  \left \|  \bm{P}_\alpha \bm{\Lambda} \bm{P}_v^\top  - \bm{Y} \right \|_F^2 + \frac{\gamma}{2} \left( \mathrm{trace}(\bm{P}_u\bm{\Lambda}\bm{P}_u^\top) + \mathrm{trace}(\bm{P}_v\bm{\Lambda}\bm{P}_v^\top) \right) \\
            &~~~\mathrm{s.t.}~~~~~~~~~
            \mathrm{trace}(\bm{P}_\alpha \bm{\Lambda} \bm{P}_\beta^\top) + \mathrm{trace}(\bm{M}\bm{\Lambda}\bm{M}^\top) = 0\\
            & ~~~~~~~~~~~~~~~~~\, 
            \bm{\Lambda} \in \mathcal{PSD}\\
            & ~~~~~~~~~~~~~~~~~\,  \bm{P}_{\alpha\beta} \bm{\Lambda} \bm{P}_{\alpha\beta}^\top \in \mathcal{CP},
        \end{aligned}
    \end{equation}
    where $\bm{M} = -\bm{P}_\alpha + \bm{P}_\beta + \bm{X}\bm{P}_u$ and the selection matrices are defined as follows:
    \begin{equation}\label{eqn:selection_matrices}\nonumber
        \begin{aligned}
            & \bm{P}_u = \begin{bmatrix} \bm{0}_{d \times n} & \bm{0}_{d \times n} & \bm{I}_{d \times d} & \bm{0}_{d \times c} \end{bmatrix} 
            & & ~ \bm{P}_\alpha = \begin{bmatrix} \bm{I}_{n \times n} & \bm{0}_{n \times n} & \bm{0}_{n \times d} & \bm{0}_{n \times c} \end{bmatrix} \\
            & \bm{P}_v = \begin{bmatrix} \bm{0}_{c \times n} & \bm{0}_{c \times n} & \bm{0}_{c \times d} & \bm{I}_{c \times c} \end{bmatrix}
            & & ~ \bm{P}_\beta = \begin{bmatrix} \bm{0}_{n \times n} & \bm{I}_{n \times n} & \bm{0}_{n \times d} & \bm{0}_{n \times c} \end{bmatrix}
        \end{aligned}
        ~~~ \text{and} ~~~ 
        \bm{P}_{\alpha\beta} = \begin{bmatrix} \bm{P}_\alpha \\ \bm{P}_\beta \end{bmatrix}
    \end{equation}
\end{theorem}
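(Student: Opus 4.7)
The plan is to use a lifting argument built around the complementarity encoding of ReLU. For any $\bm{u}_j$, introducing $\alpha_j := (\bm{X}\bm{u}_j)_+$ and the slack $\beta_j := \alpha_j - \bm{X}\bm{u}_j = (-\bm{X}\bm{u}_j)_+$ gives the equivalent set of conditions $\alpha_j,\beta_j \geq 0$, $\alpha_j - \beta_j = \bm{X}\bm{u}_j$, and $\alpha_j^\top \beta_j = 0$. Stacking $\bm{w}_j := [\alpha_j;\beta_j;\bm{u}_j;\bm{v}_j] \in \mathbb{R}^p$ and setting $\bm{\Lambda} := \sum_{j=1}^m \bm{w}_j\bm{w}_j^\top$, the selection matrices are chosen so that $\bm{P}_\alpha\bm{\Lambda}\bm{P}_v^\top = \sum_j \alpha_j\bm{v}_j^\top$ and the two trace terms reproduce the weight-decay regularizer. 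Crucially, the composite equality constraint unfolds as
\[
\mathrm{trace}(\bm{P}_\alpha\bm{\Lambda}\bm{P}_\beta^\top) + \mathrm{trace}(\bm{M}\bm{\Lambda}\bm{M}^\top) = \sum_j \alpha_j^\top\beta_j + \sum_j \|\bm{X}\bm{u}_j + \beta_j - \alpha_j\|^2,
\]
and since every summand is non-negative, requiring it to vanish forces both complementarity and the linear identity $\bm{X}\bm{u}_j = \alpha_j - \beta_j$ for every $j$ simultaneously, recovering $\alpha_j = (\bm{X}\bm{u}_j)_+$.

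The forward inequality is immediate from this construction: any feasible point of \ref{eqn:problem-setup} produces a $\bm{\Lambda} \in \mathcal{PSD}$ whose $(\alpha\beta)$-principal block is in $\mathcal{CP}$ by non-negativity, with matching objective value. The reverse inequality is the more delicate half. Given a feasible $\bm{\Lambda}$ for \ref{eqn:cp-nn}, I aim to exhibit a rank-one decomposition $\bm{\Lambda} = \sum_{j=1}^k \bm{w}_j\bm{w}_j^\top$ with $\bm{P}_{\alpha\beta}\bm{w}_j \geq 0$ for every $j$ and $k \leq R$; then reading off $\bm{u}_j = \bm{P}_u\bm{w}_j$, $\bm{v}_j = \bm{P}_v\bm{w}_j$ and padding with zero neurons up to $m$ yields a feasible point of \ref{eqn:problem-setup} with the same objective.

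To build this decomposition I combine two partial factorizations: a PSD square root $\tilde{\bm{W}} \in \mathbb{R}^{p\times r_\Lambda}$ of $\bm{\Lambda}$ with $r_\Lambda = \mathrm{rank}(\bm{\Lambda}) \leq p$, and a CP-factorization $\bm{Z} \in \mathbb{R}_+^{2n\times r_Q}$ of $\bm{P}_{\alpha\beta}\bm{\Lambda}\bm{P}_{\alpha\beta}^\top$, for which \Cref{lem:cp-rank} applied to a $2n \times 2n$ matrix gives $r_Q \leq 2n^2+n-1$. Setting $k := \max(r_\Lambda,r_Q) \leq \max\{p,2n^2+n-1\}$ and zero-padding both $\tilde{\bm{W}}$ and $\bm{Z}$ to $k$ columns, the top-$2n$ rows of the padded $\tilde{\bm{W}}$ and the padded $\bm{Z}$ are two $(2n)\times k$ factorizations of the same PSD matrix; a standard SVD argument produces a $k\times k$ orthogonal matrix $\bm{O}$ such that, after padding, $\bm{P}_{\alpha\beta}\tilde{\bm{W}}\bm{O} = \bm{Z}$. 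Then $\bm{W} := \tilde{\bm{W}}\bm{O}$ preserves $\bm{W}\bm{W}^\top = \bm{\Lambda}$ while forcing $\bm{P}_{\alpha\beta}\bm{W} = \bm{Z} \geq 0$, supplying the required neurons. The main obstacle is this simultaneous-factorization step: one must carefully track the null-space directions introduced by zero-padding to ensure that a genuine $k \times k$ orthogonal rotation (rather than only a partial isometry relating the row spaces) exists, and this is precisely what forces $k$ to dominate both $r_\Lambda$ and $r_Q$, pinning the critical width to $R = \max\{p,2n^2+n-1\}$.
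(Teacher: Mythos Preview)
Your proposal is correct and follows essentially the same route as the paper: the ReLU complementarity encoding, the lifting $\bm{w}_j = [\bm{\alpha}_j;\bm{\beta}_j;\bm{u}_j;\bm{v}_j]$, and the observation that the combined trace constraint forces each summand to vanish all match the paper's argument. Your reverse-direction step---zero-padding a PSD factorization of $\bm{\Lambda}$ and a CP factorization of its $(\alpha\beta)$-block to a common width $k = \max\{r_\Lambda, r_Q\}$ and rotating by an orthogonal $\bm{O}$ so that the top $2n$ rows become non-negative---is exactly the content of the paper's \Cref{lem:cp-rank-delta-p}, whose proof invokes the standard fact (their \Cref{lem:factorization_transformation}) that two $N$-column factorizations of the same Gram matrix differ by an $N\times N$ orthogonal transformation; the null-space extension you flag as the ``main obstacle'' is handled there by completing partial isometries to full orthogonal matrices.
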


It is noteworthy that this formulation is independent of $m$ and hence does not scale in size with the number of hidden neurons. 
Our analysis is also independent of the loss. Hence, we can get the same guarantees for any convex loss function as long as we use the corresponding loss function in (\ref{eqn:cp-nn}) as well.
Additional bias terms can be incorporated into this formulation by augmenting the input data with a column of ones.

\begin{proof}[Proof sketch]
    The first challenge in (\ref{eqn:problem-setup}) problem arises from the non-linear ReLU activation function, which we address by decomposing $\bm{X}\bm{u}$ into its positive and negative parts, inspired by \citep{brown2022unified}. 
    Specifically, for non-negative vectors $\bm{\alpha}$ and $\bm{\beta}$ such that $\mathrm{trace}(\bm{\alpha}\bm{\beta}^\top) = 0$, we have $\bm{X}\bm{u} = \bm{\alpha} - \bm{\beta}$ if and only if $\bm{\alpha} = (\bm{X}\bm{u})_+$ and $\bm{\beta} = (\bm{X}\bm{u})_+ - \bm{X}\bm{u}$, corresponding to the positive and negative components, respectively. 
    
    Introducing the variable $\bm{\lambda}_j = [\bm{\alpha}_j^\top~\bm{\beta}_j^\top~\bm{u}_j^\top~\bm{v}_j^\top]^\top \in \mathbb{R}^p$, we can reformulate (\ref{eqn:problem-setup}) as:
    \begin{equation} \label{eqn:proof-sketch-1}
        \begin{aligned}
            & \min_{\bm{\lambda}_j \in \mathbb{R}^p} ~~~ \Big \|\sum_{j=1}^m \bm{P}_\alpha \bm{\lambda}_j\bm{\lambda}_j^\top \bm{P}_v^\top   - \bm{Y}\Big \|_F^2 + \frac{\gamma}{2} \sum_{j = 1}^m \big( \|\bm{P}_u \bm{\lambda}_j \|_2^2 + \| \bm{P}_v \bm{\lambda}_j\|_2^2 \big), \\
            &~~~\mathrm{s.t.}~~~~~~
            \bm{P}_{\alpha\beta}\bm{\lambda}_j \geq \bm{0}_{2n \times 1},
            ~~~
            \mathrm{trace}(\bm{P}_{\alpha} \bm{\lambda}_j \bm{\lambda}_j^\top \bm{P}_{\beta}^\top) = 0,
            ~~~
            \bm{M}\bm{\lambda}_j = \bm{0}_{n \times 1}. 
        \end{aligned}
    \end{equation}
    
    Our goal is to show that the problem (\ref{eqn:cp-nn}) is equivalent to problem (\ref{eqn:proof-sketch-1}) when $m$ is sufficiently large. 
    To this end, we establish that any matrix $\bm{\Lambda}\in\mathcal{PSD}$ satisfying $\bm{P}_{\alpha\beta}\bm{\Lambda}\bm{P}_{\alpha\beta}^\top \in \mathcal{CP}$ can be factorized as $\smash{\bm{\Lambda} = \sum_{j = 1}^R \bm{\lambda}_j \bm{\lambda}_j^\top}$, where $\bm{P}_{\alpha\beta} \bm{\lambda}_j \geq \bm{0}$, and $R$ is a finite number bounded by $\smash{R \leq \max\{p, 2n^2 + n - 1\}}$ 
        (see \Cref{lem:cp-rank-delta-p} in the supplementary material).  
    Here, the first term ($p$) represents the maximal rank of $\bm{\Lambda}$, and the second term ($2n^2 + n - 1$) is the maximal CP-rank of $\bm{P}_{\alpha\beta} \bm{\Lambda} \bm{P}_{\alpha\beta}^\top$.
    
    Thus, for any $m \geq R$, we can reformulate (\ref{eqn:cp-nn}) by expressing $\bm{\Lambda}$ as $\sum_{i=1}^m \bm{\lambda}_j \bm{\lambda}_j^\top$, with the constraint $\bm{P}_{\alpha\beta}\bm{\lambda}_j \geq \bm{0}_{2n \times 1}$, without changing the global solution:
    \begin{equation} \label{eqn:proof-sketch-2}
        \begin{aligned}
            & \min_{\bm{\lambda}_j \in \mathbb{R}^p} ~~~ \Big \|\sum_{j=1}^m \bm{P}_\alpha \bm{\lambda}_j\bm{\lambda}_j^\top \bm{P}_v^\top   - \bm{Y}\Big \|_F^2 + \frac{\gamma}{2} \sum_{j = 1}^m \big( \|\bm{P}_u \bm{\lambda}_j \|_2^2 + \| \bm{P}_v \bm{\lambda}_j\|_2^2 \big), \\
            &~~~\mathrm{s.t.}~~~~~~
            \bm{P}_{\alpha\beta} \bm{\lambda}_j \geq \bm{0}_{2n \times 1}, 
            ~~~
            \sum_{j=1}^m \big( \mathrm{trace}(\bm{P}_\alpha \bm{\lambda}_j\bm{\lambda}_j^\top \bm{P}_\beta^\top) + \|\bm{M} \bm{\lambda}_j \|_2^2 \big) = 0,
        \end{aligned}
    \end{equation}
    where we used the fact that $\mathrm{trace}(\bm{a}\bm{b}^\top) = \mathrm{trace}(\bm{b}^\top\!\bm{a})$, which further simplifies to $\|\bm{a}\|_2^2$ when $\bm{a}=\bm{b}$. 
    We have omitted the $\mathcal{PSD}$ and $\mathcal{CP}$ constraints since they are inherently satisfied by the factorization. 
    
    Problems (\ref{eqn:proof-sketch-1}) and (\ref{eqn:proof-sketch-2}) share the same objective function. 
    Next, we show that their feasible sets are also identical. 
    Note that since each term in the summation constraint of problem (\ref{eqn:proof-sketch-2}) is non-negative, the sum can be zero only if each individual term is zero. 
    We complete the proof by noting $\|\bm{M}\bm{\lambda}_j\|_2^2 = 0$ if and only if $\bm{M}\bm{\lambda}_j = \bm{0}_{n \times 1}$. 
        For more details on the proof, we refer the reader to \Cref{app:proof-main-1}.
\end{proof}

\begin{remark}
    Although we focused on the MSE loss for simplicity, our results extend to any convex loss function $\ell:\mathbb{R}^d \times \mathbb{R}^c \to \mathbb{R}$. 
\end{remark}

\subsection{The Infinite Width Regime}\label{sec:infinite-width-nn-training-formulation}
\begin{definition}[Infinite-width Two Layer RELU Networks]
    An infinite-width fully connected two-layer ReLU network can be expressed as a Lebesgue integral over a signed measure $\nu(\bm{u}, \bm{v})$ defined on the weights $\bm{u}$ and $\bm{v}$, subject to the condition that $(\bm{x}^\top\bm{u})_+ \bm{v}$ is integrable w.r.t. $\nu$ for any $\bm{x} \in \mathbb{R}^d$:
\begin{equation}\label{eqn:infinite-width-nn-int}
\tag{\textsc{Nn}$\!\smallint$}
    \psi(\bm{x}) = \int_{\mathbb{R}^d \times \mathbb{R}^c}(\bm{x}^\top \bm{u})_+ \bm{v}^\top d\nu(\bm{u}, \bm{v}).
\end{equation}
\end{definition}

A similar characterization of infinite-width neural networks can be found in \citep{mhaskar2004tractability, bach2017breaking}.
For our analysis, we restrict $\nu(\bm{u}, \bm{v})$ to be a probability measure induced over the product space, $\nu: \mathcal{B}^{d + c} \rightarrow [0,1]$, where $\mathcal{B}^n$ is the Borel $\sigma$-algebra over $\mathbb{R}^n$. 

While the case of countably infinite number of hidden neurons, which is represented by an infinite sum in \cref{eqn:relu-network}, is captured as a special case of \cref{eqn:infinite-width-nn-int} with a discrete probability measure, \cref{eqn:infinite-width-nn-int} also accommodates other notions of infinite-width networks, as the underlying probability measure can be continuous or a combination of discrete and continuous components.

The training of an infinite-width ReLU neural network given in \cref{eqn:infinite-width-nn-int} with the mean squared error loss can be expressed as follows:
\begin{equation}\label{eqn:problem-setup-infinite-width}
\tag{\textsc{Nn}$\!\smallint$-Train}
    \min_{\nu:\mathcal{B}^{d + c} \rightarrow [0, 1]} \left \|\int_{\mathbb{R}^d \times \mathbb{R}^c} \hspace{-0.5cm} (\bm{X}\bm{u})_+ \bm{v} d\nu(\bm{u}, \bm{v}) - \bm{Y}\right \|_F^2 + \frac{\gamma}{2} \int_{\mathbb{R}^d \times \mathbb{R}^c} \hspace{-0.5cm} \big(\|\bm{u}\|_2^2  + \|\bm{v}\|_2^2\big) d\nu(\bm{u}, \bm{v}) , 
\end{equation}
where optional Tikhonov regularization is controlled by the parameter $\gamma \geq 0$. 
The following theorem presents our second main result, which shows that the training problem (\ref{eqn:problem-setup-infinite-width}) is also equivalent to our CP formulation. 

\begin{theorem}\label{thm:main-2}
    Consider the two-layer infinite-width ReLU neural network defined in \cref{eqn:infinite-width-nn-int}. 
    For any input dimension $d$ and output dimension $c$, the training problem (\textnormal{\ref{eqn:problem-setup-infinite-width}}) is equivalent to the convex optimization problem (\ref{eqn:cp-nn}), implying that they share the same optimal values. 
\end{theorem}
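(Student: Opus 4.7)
The plan is to establish equality of the optimal values of (\ref{eqn:problem-setup-infinite-width}) and (\ref{eqn:cp-nn}) by proving both inequalities separately. The structure parallels the proof of Theorem 1, but sums over hidden neurons are replaced by a pushforward construction and a Bochner integral over the parameter space.

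The upper bound---that the optimal value of (\ref{eqn:problem-setup-infinite-width}) is at most that of (\ref{eqn:cp-nn})---reduces to Theorem 1 by embedding the finite-width problem into the space of probability measures. Given any $\varepsilon$-optimal finite-width configuration $\{(\bm{u}_j, \bm{v}_j)\}_{j=1}^m$ with $m \geq R$, the discrete probability measure $\nu := \tfrac{1}{m}\sum_{j=1}^{m} \delta_{(\sqrt{m}\, \bm{u}_j,\ \sqrt{m}\, \bm{v}_j)}$ reproduces the same predictor $\psi$ by positive one-homogeneity of ReLU, and a direct computation gives $\int (\|\bm{u}\|_2^2 + \|\bm{v}\|_2^2)\, d\nu = \sum_j (\|\bm{u}_j\|_2^2 + \|\bm{v}_j\|_2^2)$. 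Hence every value attainable by (\ref{eqn:problem-setup}) is attainable by (\ref{eqn:problem-setup-infinite-width}) with the same objective, and Theorem 1 closes this direction.

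For the reverse inequality I would lift each sampled $(\bm{u}, \bm{v})$ via
$$\bm{\lambda}(\bm{u},\bm{v}) := \bigl[\, (\bm{X}\bm{u})_+^\top,\ ((\bm{X}\bm{u})_+ - \bm{X}\bm{u})^\top,\ \bm{u}^\top,\ \bm{v}^\top\, \bigr]^\top \in \mathbb{R}^p,$$
and set $\bm{\Lambda} := \int \bm{\lambda}(\bm{u},\bm{v})\,\bm{\lambda}(\bm{u},\bm{v})^\top\, d\nu(\bm{u},\bm{v})$. Then $\bm{\Lambda} \in \mathcal{PSD}$ as a positive integral of rank-one PSD matrices, and $\bm{P}_{\alpha\beta}\bm{\Lambda} \bm{P}_{\alpha\beta}^\top \in \mathcal{CP}$ because $\bm{P}_{\alpha\beta}\bm{\lambda}(\bm{u},\bm{v}) \geq \bm{0}$ pointwise. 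The trace equality in (\ref{eqn:cp-nn}) holds because the first two blocks of $\bm{\lambda}$ are precisely the positive and negative parts of $\bm{X}\bm{u}$: pointwise $\bm{\alpha}^\top\bm{\beta} = 0$ and $\bm{M}\bm{\lambda}(\bm{u},\bm{v}) = -\bm{\alpha} + \bm{\beta} + \bm{X}\bm{u} = \bm{0}$, so both traces vanish termwise under the integral. Linearity of the integral then yields $\bm{P}_\alpha \bm{\Lambda} \bm{P}_v^\top = \int (\bm{X}\bm{u})_+ \bm{v}^\top d\nu$ and $\mathrm{trace}(\bm{P}_u\bm{\Lambda}\bm{P}_u^\top) + \mathrm{trace}(\bm{P}_v\bm{\Lambda}\bm{P}_v^\top) = \int (\|\bm{u}\|_2^2 + \|\bm{v}\|_2^2)\, d\nu$, so the objectives agree at $\bm{\Lambda}$ and $\nu$.

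The main technical obstacle is the well-definedness of $\bm{\Lambda}$ as a Bochner integral: every entry of $\bm{\lambda}\bm{\lambda}^\top$ must be $\nu$-integrable. I would restrict, without loss of generality, to measures with finite objective value. When $\gamma > 0$, the regularizer directly bounds $\int \|\bm{u}\|_2^2\, d\nu$ and $\int \|\bm{v}\|_2^2\, d\nu$; the estimate $\|(\bm{X}\bm{u})_+\|_2 \leq \|\bm{X}\|_{\mathrm{op}}\|\bm{u}\|_2$ together with Cauchy--Schwarz then controls all remaining cross entries. For $\gamma = 0$ one can perturb the regularizer by a small $\varepsilon > 0$, apply the argument above, and pass to the limit $\varepsilon \downarrow 0$. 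Once these moment bounds are in place, the matrix-valued integral commutes with linear maps by Fubini and the verifications above become routine.
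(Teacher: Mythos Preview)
Your argument is correct and arrives at the result by a route that differs in structure from the paper's, though the two share the central construction $\bm{\Lambda} = \int \bm{\lambda}\bm{\lambda}^\top\, d\nu$. The paper proves a single feasible-set correspondence: it introduces an auxiliary lemma showing that the set $\Theta = \{\bm{\Lambda} \in \mathcal{PSD} : \bm{P}_{\alpha\beta}\bm{\Lambda}\bm{P}_{\alpha\beta}^\top \in \mathcal{CP}\}$ coincides with the set $\Gamma$ of matrices admitting an integral representation $\int \bm{\lambda}\bm{\lambda}^\top d\nu$ with $\bm{P}_{\alpha\beta}\bm{\lambda} \geq \bm{0}$ a.s., using Jensen's inequality applied to the indicator of $\Theta$ for the nontrivial inclusion $\Gamma \subseteq \Theta$. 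It then argues that the constraints and objective of (\ref{eqn:cp-nn}) match those of the infinite-width problem through this identification. You instead split into two inequalities: for $\mathrm{opt}(\text{NN}\!\smallint\text{-Train}) \leq \mathrm{opt}(\text{CP-NN})$ you invoke Theorem~1 together with the $\sqrt{m}$-rescaled discrete embedding, which is a clean shortcut the paper does not take; for the reverse inequality you build $\bm{\Lambda}$ from $\nu$ directly, which is essentially the paper's $\Gamma \subseteq \Theta$ step (your assertion that the integral lands in $\mathcal{CP}$ is exactly what Jensen on the indicator, or equivalently duality with the closed cone $\mathcal{COP}$, provides). Your treatment of integrability is more explicit than the paper's, which simply writes $\bm{\Lambda} = \mathbb{E}_\nu[\bm{\lambda}\bm{\lambda}^\top]$ without comment. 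One small point: your $\gamma = 0$ perturbation sketch is not quite complete as stated, since the given $\nu$ may have infinite second moments and hence infinite objective at any $\gamma = \varepsilon > 0$; a truncation of $\nu$ to a bounded region (with renormalization) before perturbing, or a direct empirical-measure approximation of the predictor, closes this easily.
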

        We refer the reader to~\Cref{app:proof-main-2} for the proof.
This result implies that the infinite-width ReLU network training problem (\ref{eqn:problem-setup-infinite-width}) can be solved to global optimality by training a finite-width ReLU network with a width exceeding the critical threshold defined in \Cref{thm:main-1}. 
The resulting solution can then be represented as a discrete probability measure with finite support. 

\section{SDP Relaxation for Training Two-Layer ReLU Networks}\label{sec:semidefinite-sos-relaxations}

Despite its convexity, solving (\ref{eqn:cp-nn}) remains intractable due to the complete positivity constraint. 
To tackle this challenge, we propose an SDP relaxation. %

\begin{proposition}[SDP-NN]
The following SDP is a relaxation of the (\ref{eqn:cp-nn}) problem, obtained by replacing the completely positive cone with the doubly non-negative cone, resulting in the constraints ${\bm{P}_{\alpha\beta} \bm{\Lambda} \bm{P}_{\alpha\beta}^\top \in \mathcal{N}}$ and $\smash{\bm{P}_{\alpha\beta} \bm{\Lambda} \bm{P}_{\alpha\beta}^\top \in \mathcal{PSD}}$. 
The latter is omitted as it is inherently satisfied for all $\bm{\Lambda} \in \mathcal{PSD}$:
\begin{equation}
\tag{\textsc{Sdp}-\textsc{Nn}}
\begin{aligned}\label{eqn:sdp-nn-1}
    & \min_{ \bm{\Lambda} \in \mathbb{R}^{p \times p}} \,~  \left \|  \bm{P}_\alpha \bm{\Lambda} \bm{P}_v^\top  - \bm{Y} \right \|_F^2 + \frac{\gamma}{2} \left( \mathrm{trace}(\bm{P}_u\bm{\Lambda}\bm{P}_u^\top) + \mathrm{trace}(\bm{P}_v\bm{\Lambda}\bm{P}_v^\top) \right) \\
    &~~~\mathrm{s.t.}~~~~~~~~\,\,
    \mathrm{trace}(\bm{P}_\alpha \bm{\Lambda} \bm{P}_\beta^\top) + \mathrm{trace}(\bm{M}\bm{\Lambda}\bm{M}^\top)= 0\\
    & ~~~~~~~~~~~~~~~~~\, 
    \bm{\Lambda} \in \mathcal{PSD}\\
    & ~~~~~~~~~~~~~~~~~\,  \bm{P}_{\alpha\beta} \bm{\Lambda} \bm{P}_{\alpha\beta}^\top \in \mathcal{N}.
\end{aligned}
\end{equation}
\end{proposition}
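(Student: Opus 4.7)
The plan is to show that every feasible point of (\ref{eqn:cp-nn}) is feasible for (\ref{eqn:sdp-nn-1}) while the objective functions coincide; since this is a minimization problem, the inclusion of feasible sets certifies that (\ref{eqn:sdp-nn-1}) is a valid relaxation. The two problems share the $\mathcal{PSD}$ constraint on $\bm{\Lambda}$, the equality constraint involving $\bm{P}_\alpha, \bm{P}_\beta, \bm{M}$, and the objective, so the entire argument reduces to comparing the conic constraints imposed on the block $\bm{P}_{\alpha\beta}\bm{\Lambda}\bm{P}_{\alpha\beta}^\top$.

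First I would verify the inclusion $\mathcal{CP} \subseteq \mathcal{DNN} = \mathcal{PSD} \cap \mathcal{N}$. By definition, any $\bm{W} \in \mathcal{CP}_n$ admits a representation $\bm{W} = \sum_i \bm{w}_i \bm{w}_i^\top$ with $\bm{w}_i \in \mathbb{R}^n_+$. Each rank-one summand $\bm{w}_i \bm{w}_i^\top$ is both positive semidefinite (PSD is a convex cone closed under conic combinations of rank-one outer products) and entrywise non-negative (the product of two non-negative vectors is non-negative). Thus $\bm{W}$ lies in $\mathcal{PSD} \cap \mathcal{N} = \mathcal{DNN}$, giving $\mathcal{CP} \subseteq \mathcal{DNN}$. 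Consequently, if $\bm{\Lambda}$ is feasible for (\ref{eqn:cp-nn}), then $\bm{P}_{\alpha\beta}\bm{\Lambda}\bm{P}_{\alpha\beta}^\top \in \mathcal{CP} \subseteq \mathcal{N}$, so the non-negativity constraint of (\ref{eqn:sdp-nn-1}) is satisfied.

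Next I would justify the remark that the constraint $\bm{P}_{\alpha\beta}\bm{\Lambda}\bm{P}_{\alpha\beta}^\top \in \mathcal{PSD}$ is automatically implied by $\bm{\Lambda} \in \mathcal{PSD}$. This follows from a standard congruence argument: for any $\bm{u} \in \mathbb{R}^{2n}$, one has $\bm{u}^\top \bm{P}_{\alpha\beta}\bm{\Lambda}\bm{P}_{\alpha\beta}^\top \bm{u} = (\bm{P}_{\alpha\beta}^\top \bm{u})^\top \bm{\Lambda} (\bm{P}_{\alpha\beta}^\top \bm{u}) \geq 0$, where the inequality uses $\bm{\Lambda} \in \mathcal{PSD}$. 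Hence the $\mathcal{PSD}$ part of the doubly non-negative cone is redundant and may be omitted from the statement of (\ref{eqn:sdp-nn-1}) without altering the feasible set.

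Putting these two observations together, the feasible set of (\ref{eqn:cp-nn}) is contained in that of (\ref{eqn:sdp-nn-1}), and since the objectives are identical, the optimal value of (\ref{eqn:sdp-nn-1}) is a lower bound on the optimal value of (\ref{eqn:cp-nn}). There is no genuine obstacle here; the entire content is the containment $\mathcal{CP} \subseteq \mathcal{DNN}$ and the elementary congruence identity above. The only subtlety worth flagging to the reader is that the relaxation is in general \emph{strict}, because $\mathcal{DNN} \subsetneq \mathcal{CP}$ for matrices of order $n \geq 5$, which motivates the experimental evaluation of tightness carried out later in the paper.
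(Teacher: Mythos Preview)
Your proof is correct and matches the paper's (implicit) justification: the paper does not give a separate proof of this proposition, treating it as immediate from the inclusion $\mathcal{CP} \subseteq \mathcal{DNN}$ stated in the background and the congruence argument you spell out. One small slip in your final aside: the strict inclusion should read $\mathcal{CP} \subsetneq \mathcal{DNN}$ for order $\geq 5$, not the reverse; this does not affect the main argument.
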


\begin{remark}
    This is essentially the zeroth-order Sum-of-Squares (SoS) relaxation of (\ref{eqn:cp-nn}). 
    While tighter relaxations could be constructed using the SoS hierarchy \citep{parrilo2000structured}, the computational requirements often make these higher-order formulations impractical. 
\end{remark}

After solving (\ref{eqn:sdp-nn-1}), a rounding procedure is required to extract the weights of the trained neural network from the lifted solution $\bm{\Lambda}_\star$. 

\begin{proposition}[Rounding]
    Selecting the rounding width $R$ as the critical width of the network, we can formulate the rounding problem as follows: 
    \begin{equation} \label{eqn:rounding}
        \begin{aligned}
            & \min_{ \bm{\lambda} \in \mathbb{R}^{p \times R}} \,~\,~~~  \phi(\bm{\lambda}) ~ := ~\big \|  \bm{\Lambda}_\star - \bm{\lambda}\bm{\lambda}^\top \big \|_F^2 \\
            &~~~~\mathrm{s.t.}~~~~~~~~~
            \bm{M}\bm{\lambda} = \bm{0}_{n \times R}, 
            \quad 
            \bm{P}_{\alpha}\bm{\lambda} \odot \bm{P}_{\beta}\bm{\lambda} = \bm{0}_{n \times R}, 
            \quad
            \bm{P}_{\alpha\beta}\bm{\lambda} \geq \bm{0}_{2n \times R},
        \end{aligned}
    \end{equation}
    where $\odot$ denotes the Hadamard product. 
    The first and second layer weights can be recovered from the rounded solution as $\bm{P}_u\bm{\lambda}$ and $\bm{P}_v\bm{\lambda}$, respectively. 
\end{proposition}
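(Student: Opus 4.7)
The plan is to verify that (i) any $\bm{\lambda}$ feasible for \eqref{eqn:rounding} decomposes into columns that encode a legitimate two-layer ReLU network, and (ii) when the lifted optimum $\bm{\Lambda}_\star$ actually lies in the completely positive cone, choosing $R$ equal to the critical width from \Cref{thm:main-1} is sufficient to drive $\phi$ to zero, so the recovery is exact. When the SDP relaxation is not tight, the same argument yields only a nearest-CP-factorization heuristic.

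First I would write $\bm{\lambda} = [\bm{\lambda}_1 \,\cdots\, \bm{\lambda}_R]$ so that $\bm{\lambda}\bm{\lambda}^\top = \sum_{j=1}^R \bm{\lambda}_j \bm{\lambda}_j^\top$, and partition each column, following the convention of the proof sketch of \Cref{thm:main-1}, as $\bm{\lambda}_j = [\bm{\alpha}_j^\top\ \bm{\beta}_j^\top\ \bm{u}_j^\top\ \bm{v}_j^\top]^\top$. The three constraints then read, column-wise: $\bm{\alpha}_j,\bm{\beta}_j \geq \bm{0}$ from $\bm{P}_{\alpha\beta}\bm{\lambda} \geq \bm{0}$; $\bm{\alpha}_j - \bm{\beta}_j = \bm{X}\bm{u}_j$ from $\bm{M}\bm{\lambda} = \bm{0}$ (since $\bm{M} = -\bm{P}_\alpha + \bm{P}_\beta + \bm{X}\bm{P}_u$); and $\bm{\alpha}_j \odot \bm{\beta}_j = \bm{0}$ from the Hadamard constraint. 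Invoking the elementary equivalence already used in the proof sketch of \Cref{thm:main-1}, these three conditions together are equivalent to $\bm{\alpha}_j = (\bm{X}\bm{u}_j)_+$ and $\bm{\beta}_j = (\bm{X}\bm{u}_j)_+ - \bm{X}\bm{u}_j$. Consequently the extracted pair $\{\bm{P}_u\bm{\lambda}_j, \bm{P}_v\bm{\lambda}_j\}_{j=1}^R$ parameterizes a width-$R$ two-layer ReLU network whose outputs on $\bm{X}$ coincide with $\bm{P}_\alpha(\bm{\lambda}\bm{\lambda}^\top)\bm{P}_v^\top = \sum_j (\bm{X}\bm{u}_j)_+\bm{v}_j^\top$.

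Second, I would justify the choice of rounding width by appeal to the same factorization result underlying \Cref{thm:main-1} (\Cref{lem:cp-rank-delta-p} in the appendix, together with \Cref{lem:cp-rank}): whenever $\bm{\Lambda}_\star \in \mathcal{PSD}$ and $\bm{P}_{\alpha\beta}\bm{\Lambda}_\star\bm{P}_{\alpha\beta}^\top \in \mathcal{CP}$ — which is precisely the regime in which the SDP relaxation is tight — there exists a factorization $\bm{\Lambda}_\star = \sum_{j=1}^R \bm{\lambda}_j\bm{\lambda}_j^\top$ with $R \leq \max\{p, 2n^2 + n - 1\}$ and $\bm{P}_{\alpha\beta}\bm{\lambda}_j \geq \bm{0}$ for every $j$. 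Combined with the previous step, this factorization satisfies all the constraints of \eqref{eqn:rounding} and attains $\phi(\bm{\lambda}) = 0$, so the recovered ReLU network realizes the global optimum of \eqref{eqn:problem-setup} whenever the relaxation is exact.

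The principal obstacle is that the Hadamard constraint $\bm{P}_\alpha\bm{\lambda}\odot\bm{P}_\beta\bm{\lambda} = \bm{0}$ is non-convex, so \eqref{eqn:rounding} itself is a non-convex feasibility-plus-projection problem and cannot be solved to global optimality in polynomial time; furthermore, when $\bm{\Lambda}_\star$ lies strictly in $\mathcal{DNN}\setminus\mathcal{CP}$ the minimum of $\phi$ is bounded away from zero, so the proposition can only assert that \eqref{eqn:rounding} is a well-posed heuristic whose exact feasible points correspond bijectively to width-$R$ ReLU networks — not that a global minimizer is computationally accessible. The proof should therefore be scoped to establish this correspondence and the tight-case exactness, deferring computational issues to the experimental section.
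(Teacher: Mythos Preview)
Your proposal is correct and follows essentially the same approach as the paper, which simply states that ``it is straightforward to verify that the feasible set of problem \eqref{eqn:rounding} coincides with that of problem \eqref{eqn:proof-sketch-1}, which is, in turn, equivalent to the feasible set of the (\ref{eqn:cp-nn}) problem.'' You spell out this verification in more detail than the paper does, and your additional observation about exact recovery when $\bm{P}_{\alpha\beta}\bm{\Lambda}_\star\bm{P}_{\alpha\beta}^\top \in \mathcal{CP}$ is a natural elaboration that the paper leaves implicit.

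One small point worth tightening in step (ii): the factorization delivered by \Cref{lem:cp-rank-delta-p} guarantees only $\bm{P}_{\alpha\beta}\bm{\lambda}_j \geq \bm{0}$; to conclude that the remaining two constraints of \eqref{eqn:rounding} are also met you need to invoke the equality $\mathrm{trace}(\bm{P}_\alpha \bm{\Lambda}_\star \bm{P}_\beta^\top) + \mathrm{trace}(\bm{M}\bm{\Lambda}_\star\bm{M}^\top) = 0$ satisfied by $\bm{\Lambda}_\star$, then split it into its two nonnegative summands (exactly as in the proof of \Cref{thm:main-1}) to force $\bm{M}\bm{\lambda}_j = \bm{0}$ and $\langle \bm{P}_\alpha\bm{\lambda}_j, \bm{P}_\beta\bm{\lambda}_j\rangle = 0$ for each $j$, the latter upgrading to the elementwise Hadamard condition by nonnegativity. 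You gesture at this via ``the same factorization result underlying \Cref{thm:main-1},'' but the sentence ``Combined with the previous step'' misattributes the implication direction; make the dependence on the trace constraint explicit.
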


It is straightforward to verify that the feasible set of problem (\ref{eqn:rounding}) coincides with that of problem (\ref{eqn:proof-sketch-1}), which is, in turn, equivalent to the feasible set of the (\ref{eqn:cp-nn}) problem. 
Consequently, this formulation aims to identify the closest point $\bm{\lambda}\bm{\lambda}^\top$ to the SDP solution $\bm{\Lambda}_\star$ within the feasible set of (\ref{eqn:cp-nn}). 
While finding a global solution is intractable due to the non-convexity of the objective function and constraints, we propose an effective heuristic based on the three-operator splitting (TOS) method \citep{davis2017three}. 
We define the two sets comprising the constraints as:
\begin{equation*}
    \begin{aligned}
        \mathcal{D}_1 & := \{ \bm{\lambda} \in \mathbb{R}^{p \times R} : \bm{P}_{\alpha}\bm{\lambda} \odot \bm{P}_{\beta}\bm{\lambda} = \bm{0}_{n \times R}, ~ \bm{P}_{\alpha\beta}\bm{\lambda} \geq \bm{0}_{2n \times R} \} \\
        \mathcal{D}_2 & := \{ \bm{\lambda} \in \mathbb{R}^{p \times R} : \bm{M} \bm{\lambda} = \bm{0}_{n \times R}\} 
    \end{aligned}
\end{equation*}
Then, we can apply TOS for the rounding, starting from an initial estimate $\bar{\bm{\lambda}}^0 \in \mathbb{R}^{p \times R}$ and iteratively updating it by the following formula: 
\begin{equation} \tag{TOS}
    \begin{aligned}
        \bm{\lambda}^k & = \mathrm{proj}_{\mathcal{D}_1} \big( \bar{\bm{\lambda}}^k \big) \\
        \hat{\bm{\lambda}}^k & = \mathrm{proj}_{\mathcal{D}_2} \big( 2\bm{\lambda}^k - \bar{\bm{\lambda}}^k - \eta\nabla \phi(\bm{\lambda}^k) \big) \\
        \bar{\bm{\lambda}}^{k+1} & = \bar{\bm{\lambda}}^k - \bm{\lambda}^k + \hat{\bm{\lambda}}^k
    \end{aligned}
\end{equation}
where $\eta > 0$ is the step-size parameter. 
We can compute the gradient by $\nabla \phi(\bm{\lambda}) = 4 (\bm{\lambda} \bm{\lambda}^\top - \bm{\Lambda}_\star) \bm{\lambda}$. 
$\mathrm{proj}_{\mathcal{D}_2}(\bm{\lambda}) = (\bm{I} - \bm{M}^\dagger \bm{M})\bm{\lambda}$ projects $\bm{\lambda}$ onto the nullspace of $\bm{M}$. 
And $\mathrm{proj}_{\mathcal{D}_1}$ can be computed as
\begin{equation}
    \mathrm{proj}_{\mathcal{D}_1}(\bm{\lambda})_{i,j} = \begin{cases}
        0 & 1 \leq i \leq 2n,~ \bm{\lambda}_{i,j} < 0 \\
        0 & 1 \leq i \leq n,~\bm{\lambda}_{i+n, j} > \bm{\lambda}_{i, j} \\
        0 & n + 1 \leq i \leq 2n,~\bm{\lambda}_{i-n, j} > \bm{\lambda}_{i, j} \\
        \bm{\lambda}_{i,j} & \mbox{otherwise}
   \end{cases}
\end{equation}

\begin{remark}\label{rem:round}
    We employ the rounding step primarily to demonstrate that the solution obtained from the SDP relaxation is reasonable. 
    Our main objective is to assess the quality of the SDP relaxation, not to guarantee convergence in the rounding phase. 
    To the best of our knowledge, there are no known convergence guarantees for TOS in our setting, as existing results for TOS in non-convex optimization apply only to smooth non-convex terms \citep{bian2021three, yurtsever2021three} and do not extend to non-convex constraint sets. 
    Nonetheless, TOS has proven to be an effective heuristic in our experiments, validating the (\ref{eqn:sdp-nn-1}) relaxation. 
    The ability to extract trained neural network weights confirms that the (\ref{eqn:sdp-nn-1}) solution encodes the necessary information. 
    Importantly, the rounding step is not integral to our theoretical contributions and is used only for empirical evaluation.
\end{remark}

\begin{table}[t]
\setlength{\tabcolsep}{6pt}
\caption{Summary of datasets, their sizes, and dimensions of the corresponding SDP relaxations.\vspace{1mm}}
\label{table:experiment_summary}
\centering
\begin{tabular}{l|ccc|cc}
\toprule
\textbf{Dataset} & \textbf{$\#$ Instances} & \textbf{$\#$ Inp. Feat} & \textbf{$\#$ Out. Feat} & \textbf{$\#$ Variables} & \textbf{$\#$ Constraints} \\ 
\midrule
Random                      & 25   & 2   & 5   & 3249     & 8999 \\
Spiral                      & 60   & 2   & 3   & 15625    & 45651 \\
Iris                        & 75  & 4   & 3   & 24649    & 71799 \\
Ionosphere                  & 175  & 34  & 2   & 148996   & 420493 \\
Pima Indians                & 383  & 8   & 2   & 602176   & 1791109 \\
Bank Notes   & 685 & 4   & 2   & 1893376  & 5663653 \\
MNIST   & 1000 & 20 & 10 & 4120900  & 12743300 \\
\bottomrule
\end{tabular}\\[0.25em]
\footnotesize{\textit{Note: The $\#$ \textit{Instances} column shows the number of training samples. \\ All real datasets are split into $50\%$ train and $50\%$ test sets.}\vspace{-4mm}}
\end{table}
\section{Numerical Experiments}
\label{sec:numerical-experiments}
We conduct a series of experiments over synthetic and real datasets to investigate the empirical tightness of our SDP relaxation.

\paragraph{Datasets}
We use the following datasets in our experiments:

\noindent\textit{Random}:
A synthetic dataset with a data matrix $\bm{X} \in \mathbb{R}^{25 \times 2}$ and labels $\bm{Y} \in \mathbb{R}^{25 \times 5}$, generated using a 2-layer neural network with 100 hidden neurons. 
We create 100 random datasets by initializing the entries of $\bm{X}$ and the generator network's weights using a standard normal distribution. 

\noindent\textit{Spiral}:
An artificially-generated 3-class classification dataset, described by  \citep{sahiner2020vector}. 
It includes 60 samples (20 per class), with a data matrix $\bm{X} \in \mathbb{R}^{60 \times 2}$ containing 2 input features and one-hot encoded labels $\bm{Y} \in \mathbb{R}^{60 \times 3}$. 

\noindent\textit{Iris \citep{misc_iris_53}}:
The dataset consists of 150 samples of iris flowers from three different classes, each sample is described by four features. 
The training partition includes a data matrix $\bm{X} \in \mathbb{R}^{75 \times 4}$ and one-hot encoded labels $\bm{Y} \in \mathbb{R}^{75 \times 3}$. 

\noindent\textit{Ionosphere \citep{ionosphere_52}}:
A radar dataset with 351 instances and 34 input features for binary classification, with class imbalance. 
The training partition includes a data matrix $\bm{X} \in \mathbb{R}^{175 \times 34}$ and one-hot encoded labels $\bm{Y} \in \mathbb{R}^{175 \times 2}$. 

\noindent\textit{Pima Indians Diabetes \citep{smith1988using}}:
A diabetes prediction dataset that consists of 768 patients, with 8 medical predictors as features, and a binary classification task, with class imbalance. 
Out of 768 instances, we have 268 positive instances. 
The training partition includes a data matrix $\bm{X} \in \mathbb{R}^{383 \times 8}$ and one-hot encoded labels $\bm{Y} \in \mathbb{R}^{383 \times 2}$.

\noindent\textit{Bank Notes Authentication \citep{banknote_authentication_267}}:
A binary classification dataset with 1372 instances, where features extracted using wavelet transforms are used to determine whether a banknote is genuine or forged. 
The training data includes $\bm{X} \in \mathbb{R}^{685 \times 4}$ and one-hot encoded labels $\bm{Y} \in \mathbb{R}^{685 \times 2}$.

\noindent\textit{MNIST \citep{lecun2010mnist}}:
A down-sampled and dimensionally-reduced version of the popular image classification dataset, consisting of 1000 instances with 20 features each obtained using Principal Component Analysis (PCA). 
The training data includes $\bm{X} \in \mathbb{R}^{1000 \times 20}$ and one-hot encoded labels $\bm{Y} \in \mathbb{R}^{1000 \times 10}$.

\Cref{table:experiment_summary} provides an overview of the dataset sizes and the dimensionality of the corresponding problem size of (\ref{eqn:sdp-nn-1}). 
All real datasets were split 50-50 into train and test partitions.

\paragraph{Computational environment \& complexity} 
The experiments were conducted on a Intel Xeon Gold 6132 with 192 GB of RAM and 2x14 cores. 
Solving CP-NN or the associated rounding problems is NP-hard. 
The complexity of solving an SDP depends on the specific algorithm used. 
Our SDP formulations were solved by CVXPY \citep{diamond2016cvxpy}, employing either the interior point method (IPM) solver by MOSEK \citep{andersen2000mosek}, or the Splitting Cone Solver (SCS) \citep{o2016conic}, depending on the problem size. 
A typical interior point method for solving an SDP problem with an $n \times n$ matrix variable and $m$ constraints requires approximately $\mathcal{O}(n^3 + m^2 n^2 + m^3)$ arithmetic operations per iteration and about $\mathcal{O}(\log(1/\epsilon))$ iterations to achieve an $\epsilon$-accurate solution. 
While the formulation scales with $n^2$ (since the rank is absorbed), storing a CP factorization of the decision variable (i.e., the neural network weights) requires $\mathcal{O}(n^3)$ storage. 

\setlength{\tabcolsep}{3pt} %
\begin{table}[t]
\caption{SGD Loss, Approximation Ratio (AR) (\%), and Runtime (sec) of SDP-NN.}\vspace{-1mm}\label{tab:approx-ratio}
\resizebox{\textwidth}{!}{ %
\begin{tabular}{l|c|cccccc|c|c}
\multicolumn{1}{c|}{Dataset} & $\gamma$              & \multicolumn{6}{c|}{Training Objective}                                      & AR     & Runtime \\ \hline
                             & \multicolumn{1}{l|}{} & SGD-5       & SGD-10     & SGD-100    & SGD-200    & SGD-300    & SDP-NN     & SDP-NN & SDP-NN \\ \cline{3-10} 
\multirow{2}{*}{Random}  & 0.1                   & 18.27 ±5.76 & 8.60 ±1.16 & 8.09 ±1.04 & 8.09 ±1.04 & 8.09 ±1.04 & 7.28 ±0.98 & 89.93  & 11.46  \\
                             & 0.01                  & 11.78 ±5.53 & 1.32 ±0.25 & 0.94 ±0.12 & 0.94 ±0.12 & 0.94 ±0.12 & 0.76 ±0.10 & 80.66  & 14.85  \\ \hline
\multirow{2}{*}{Spiral}      & 0.1                   & 16.64       & 16.59      & 16.59      & 16.59      & 16.59      & 16.24      & 97.84  & 1566.65 \\
                             & 0.01                  & 15.56       & 15.21      & 15.16      & 15.16      & 15.16      & 11.66      & 76.90  & 923.83 \\                 \bottomrule   
\end{tabular}\vspace{-4mm}
} %
\end{table}
\subsection{Empirical Approximation Ratio of the SDP Relaxation}

We evaluate the tightness of our SDP relaxation by comparing its optimal objective value to the training loss obtained using Stochastic Gradient Descent (SGD) on (\ref{eqn:problem-setup}). 
Due to the difficulty of approximating the global optimum with SGD on real datasets, we conduct this experiment using the synthetic \textit{Random} and \textit{Spiral} datasets. 

The results are summarized in \Cref{tab:approx-ratio}.
For each dataset, we solve the training problem using regularization parameters $\gamma = 0.1$ and $\gamma = 0.01$. 
The table columns show the training loss achieved by SGD for various hidden layer sizes, ranging from $m = 5$ to $m = 300$, along with the training loss achieved by the SDP-NN formulation. 
For the \textit{Random} dataset, the results are averaged over 100 trials, with error bars indicating standard deviations. 

The last column reports the \textbf{approximation ratio} (AR) of the SDP relaxation, computed as the ratio of the loss obtained by SDP-NN to the loss achieved by SGD with $300$ hidden neurons. 
Empirically, we find that the approximation ratio is above $76.90\%$ in these experiments. 
We also observe that the ratio improves for higher regularization parameters, which is expected, as stronger regularization tends to smooth the optimization landscape and reduces the impact of non-convexity. 

\begin{remark}
    Ratio reported in \Cref{tab:approx-ratio} represents only a lower bound on the actual approximation ratio; since, (i) SGD may converge to a local solution, and (ii) $300$ hidden neurons used in this experiment is smaller than the actual critical width, which scales quadratically with the number of data points, and quickly exceeds practical limits. 
\end{remark}

To partially address these challenges, we ran SGD five times with different random initializations for each configuration and selected the best outcome. 
Even so, there is no guarantee of reaching a global optimum. 
Similarly, although $m=300$ is below the critical width, we observe that the results tend to saturate, and increasing $m$ further does not lead to any significant improvements for SGD. 
Despite these limitations, SGD training can provide an upper bound on the optimal value for infinite-width neural network training. 
To complement this, we compute a lower bound on the optimal values of the SDP formulations using the dual objectives in CVXPY, utilizing the interior-point solver from MOSEK. 
Together, these bounds allow us to establish a reliable approximation lower-bound for the relaxation. 

The training loss curves of SGD for each experiment, along with implementation details such as learning rate and number of epochs, are provided in 
    \Cref{sec:training-loss-curves-sgd}.

\subsection{Prediction Performance of the SDP Relaxation}

We assess the prediction quality of our SDP relaxation combined with the TOS-based rounding procedure on real datasets using regularization parameters $\gamma = 0.1$ and $\gamma = 0.01$. 
Due to computational constraints, we set the rank $R$ in the rounding procedure to $300$, which is lower than the actual critical width. 
The rounding procedure is initialized using the square root of the SDP-NN solution, obtained via SVD decomposition. 
We then adjust the resulting factor matrices by either truncating or expanding them to dimensions $\mathbb{R}^{p \times 300}$. 
The TOS step size is set as $\eta = {1}/{\|\bm{\Lambda}_\star\|_2}$, where $\bm{\Lambda}_\star$ denotes the solution to the SDP-NN, and the algorithm is run for $1000$ iterations. 

We benchmark our test accuracy against the NNGP and NTK kernel methods and include results from SGD with $300$ hidden neurons as a baseline. 
The results of this experiment are summarized in \Cref{tab:experiment-prediction}, where the weighted average F1 score and test accuracy are used as the performance metrics. 
For classification, the threshold was varied from $0$ to $1$ in increments of $0.01$, and the threshold that maximized the overall test accuracy was selected. 
Overall, the predictions obtained from the SDP relaxation are competitive across most settings. \vspace{1mm}
\\
\noindent\textbf{Effect of bias.} We also experimented with incorporating a bias term to the first layer. As expected, \Cref{tab:experiment-prediction} (SDP-NN-bias) shows a general increase in terms of the weighted average F1 score and overall test accuracy.

\begin{table}[t]
\centering
\caption{Evaluating prediction quality (F1 Score \& Accuracy) and runtime (seconds).\vspace{-1mm}}
\label{tab:experiment-prediction}
\resizebox{\textwidth}{!}{ %
\setlength{\tabcolsep}{3.8pt}
\begin{tabular}{lcccccccccc}
\toprule
\multirow{2}{*}{Method} & \multicolumn{2}{c}{Iris} & \multicolumn{2}{c}{Ionosphere} & \multicolumn{2}{c}{Pima Indians} & \multicolumn{2}{c}{Bank Notes} & \multicolumn{2}{c}{MNIST} \\
\cmidrule(r){2-3} \cmidrule(r){4-5} \cmidrule(r){6-7} \cmidrule(r){8-9} \cmidrule(r){10-11}
 & $\gamma = 0.1$ & $\gamma = 0.01$ & $\gamma = 0.1$ & $\gamma = 0.01$ & $\gamma = 0.1$ & $\gamma = 0.01$ & $\gamma = 0.1$ & $\gamma = 0.01$ & $\gamma = 0.1$ & $\gamma = 0.01$ \\
\midrule
\multicolumn{11}{c}{\textbf{Weighted Average F1 Score}} \\
\midrule
SGD    & 0.96 & 0.987 & 0.915 & 0.898 & 0.626 & 0.594 & \textbf{0.993} & 0.992 & 0.880 & 0.863 \\
NNGP   & 0.866 & 0.975 & 0.919 & \textbf{0.928} & {0.690} & 0.683 & 0.978 & \textbf{0.993} & 0.903 & \textbf{0.919} \\
NTK    & \textbf{0.993} & {0.993} & \textbf{0.924} & 0.920 & 0.668 & 0.672 & 0.991 & \textbf{0.993} & \textbf{0.907} & 0.915 \\
SDP-NN & 0.987 & 0.987 & \textbf{0.924} & 0.927 & 0.679 & {0.703} & 0.930 & 0.893 & 0.862 & 0.849 \\
SDP-NN-bias & 0.946  & \textbf{1.000} & 0.912 & 0.921 & \textbf{0.714} & \textbf{0.744}                & 0.991 & 0.985 & 0.858 & 0.838 \\
\midrule
\multicolumn{11}{c}{\textbf{Overall Test Accuracy}} \\
\midrule
SGD    & 0.960 & {0.987} & 0.891 & 0.88 & 0.583 & 0.557 & {0.988} & \textbf{0.985} & 0.818 & 0.796 \\
NNGP   & 0.827 & 0.973 & 0.886 & 0.886 & 0.534 & 0.602 & 0.966 & 0.983 & 0.858 & \textbf{0.879} \\
NTK    & \textbf{0.987} & {0.987} & 0.886 & 0.897 & 0.586 & 0.620 & 0.983 & \textbf{0.985} & \textbf{0.863} & 0.876 \\
SDP-NN & \textbf{0.987} & {0.987} & \textbf{0.920} & {0.909} & {0.646} & {0.625} & 0.860 & 0.767 & 0.794 & 0.778 \\
SDP-NN-bias & 0.947 & \textbf{1.000} & 0.909 & \textbf{0.914} & \textbf{0.672} & \textbf{0.703} & \textbf{0.991} & {0.980} & 0.791 & 0.76 \\
\midrule
\multicolumn{11}{c}{\textbf{Runtime (sec)}} \\
\midrule
SDP-NN & 39 & 171 & 4416 & 8150 & 12256 & 13564 & 102919 & 101952 & 66157 & 155810 \\
\bottomrule
\end{tabular}\vspace{-4mm}
} %
\vspace{-4mm}
\end{table}

\section{Related Works}
\label{sec:related-work}

We now review the literature on (in)finite width neural networks as well as convex optimization techniques for training them.

\paragraph{Convex optimization for neural network training}
Early discussions of convex neural nets can be found in  \citep{bengio2005convex,bach2017breaking, fang2022convex}. 
These initial studies primarily focus on networks with infinite width, which leads to optimization problems in an infinite-dimensional space. 
\vspace{2mm}
\\
\citet{sahiner2020vector} developed convex equivalents for the non-convex NN training problem with ReLU activations, based on a theory of convex semi-infinite duality. 
Different than ours, their approach involves an explicit summation over all possible sign patterns in the ReLU layer. 
Since the number of distinct sign patterns grows exponentially with the rank of the data matrix, their analysis is primarily focused on data matrices of fixed small rank or the \textit{spike-free} structures. 
Building on this duality theory, \citet{ergen2020implicit} derived convex optimization formulations for two- or three-layer convolutional neural networks (CNNs) with ReLU activations. 
\citet{bartan2021neural} explored semidefinite lifting for training neural networks with polynomial activations, which is further extended to quantized neural networks with polynomial activations in \citep{bartan2021training}, and to deep neural networks with polynomial and ReLU activations in \citep{bartan2023convex}. 
\citet{ergen2021global} demonstrated that training multiple three-layer ReLU regularized sub-networks can be equivalently cast as a convex optimization problem in a higher-dimensional space.
\citet{ergen2021convex,ergen2021revealing} showed that optimal hidden-layer weights for two-layer and deep ReLU neural networks are the extreme points of a convex set.
They further applied these methods in transfer learning tasks with large language models. 
\citet{sahiner2022unraveling} developed convex optimization problems for vision transformers, focusing on a single self-attention block with ReLU activation.
Finally, \citet{wang2024randomized} introduced randomized algorithms from a geometric algebra perspective to efficiently address the enumeration of sign patterns. 
\citet{sahinerscaling} applied the Burer-Monteiro factorization \citep{burer2003nonlinear} to efficiently solve these formulations at a larger scale. \vspace{2mm}
\\
Our mathematical techniques differ from prior work. {Unlike~\citep{sahiner2020vector}}, we do not require enumeration or sampling of sign patterns, nor do we rely on the aforementioned duality theory. {Unlike~\citep{fang2022convex}, we do not assume activation functions with continuously differentiable gradients,} and formulate the training of {finite-} and {infinite}-width ReLU neural networks as a finite-dimensional convex optimization problem, assuming fixed input/output dimensions and a set number of data points, {characterizing a finite critical width}. 
Since our CP formulation is exact yet intractable, we propose SDP relaxations that can be solved in polynomial~time.%

\paragraph{Infinite-Width Neural Networks (IWNNs)}
Investigations of IWNNs address fundamental questions about the limits of their learning ability and the types of functions they can approximate. 
This focus is also relevant as overparameterized networks, which are commonly used in practice, often demonstrate strong generalization to unseen data \citep{zhang2021understanding}. 
Another motivation for studying IWNNs is their connection to other areas of machine learning, such as Gaussian processes \citep{neal2012bayesian}.
There are two main approaches for training infinite-width fully connected neural networks in the literature: weakly-trained networks and fully-trained networks.\vspace{1mm}
\\
Weakly-trained networks are those where only the top classification layer is trained, while all other layers remain randomly initialized. 
\citet{lee2017deep,matthews2018gaussian} studied fully-connected weakly-trained IWNNs, while \citet{novak2018bayesian,garriga2018deep} focused on convolutional variants. 
\citet{yang2019scaling} extended this analysis to other weakly-trained infinite-width architectures. 
All these works adopt a Gaussian process (GP) perspective of weakly-trained IWNNs.\vspace{2mm}
\\
In contrast, fully-trained networks are those in which all the layers are trained, using variations of gradient descent. 
\citet{jacot2018neural} introduced Neural Tangent Kernel (NTK), which captures the behavior of fully-connected deep neural networks in the infinite-width limit, trained using gradient descent. 
The NTK is different from the GP kernels: it is defined by the gradient of a fully connected network's output with respect to its weights, based on random initialization. 
\citet{arora2019exact} gave a rigorous, non-asymptotic proof that the NTK captures the behavior of a fully-trained wide neural network under weaker conditions and introduced the concept of convolutional neural tangent kernel (CNTK). 
They also developed an exact and efficient dynamic programming algorithm to compute CNTKs for ReLU activation.
\citet{lee2020finite} conducted an empirical study comparing the Neural Network Gaussian Process (NNGP) kernel \citep{lee2017deep} and the NTK in both finite-width and IWNNs. Unlike these, we do not adopt a GP-based view or kernel methods. Instead, we model fully-trained fully-connected IWNNs using a CP formulation and corresponding computationally tractable SDP relaxations. 

Another line of work focuses on the learning theory of infinite-width neural networks, including universal approximation bounds (e.g., \citep{barron1993universal, mhaskar2004tractability, klusowski2018approximation}), representer theorems (e.g., \citep{parhi2021banach, bartolucci2023understanding, shenouda2024variation}), and capacity control results (e.g., \citep{ongie2019function, savarese2019infinite}). 
Of particular relevance, in their recent work, \citet{shenouda2024variation} showed that an infinite dimensional learning problem in variation spaces (a special class of reproducing kernel Banach spaces) can be solved by training a finite-width neural network with width greater than the square of the number of training data points. 
Intriguingly, this result draws parallels with our critical width $R$, which also scales quadratically with the number of training data points, although the underlying assumptions and analytical techniques in the two works are significantly different.

\paragraph{Other related works}
We address the non-convex problem of training two-layer ReLU neural networks by transforming it into a convex optimization problem in a higher-dimensional space. 
This approach belongs to a class of techniques known as \emph{lifting}, where the decision variable $\bm{w}$ is replaced with a quadratic term $\bm{W}=\bm{w}\bm{w}^\top$, enabling the convex reformulation or relaxation of an otherwise non-convex problem, see \citep{bomze2000copositive,burer2009copositive,bao2011semidefinite,anstreicher2012convex} and references therein. 
In the context of neural networks, \citet{brown2022unified} applied lifting for the verification of neural network outputs. 
A key distinction between our formulation and the prior work in \citep{burer2009copositive, brown2022unified} lies in the structure of the objective function. 
In previous work, the objective function is quadratic in the original space and simplifies to a linear objective in the lifted space, which plays a crucial role in the analysis, as solutions to linear minimization problems always appear at extreme points of the feasible set.
In our problem, the objective function is convex in the lifted space (rather than linear), and we establish an exact correspondence between the feasible sets in the original and lifted spaces for both wide and infinite-width networks. \vspace{2mm}
\\
\noindent Our work should not be confused with the line of research introduced by \citet{askari2018lifted} known as \textit{lifted neural networks}, as the notion of lifting in our work differs from theirs. 
Both the goals and approaches are distinct. 
They reformulate the problem as a biconvex optimization, enabling layer-wise training via block coordinate descent. 
They do not introduce the quadratic terms that are central to our approach, and their resulting problem formulation remains non-convex.

\section{Conclusions and Future Directions}
\label{sec:conclusions}

We derived a convex optimization formulation for the training problem of a two-layer ReLU neural network with a sufficiently wide hidden layer. 
Despite its convexity, the problem is intractable using classical computational methods due to the completely positive cone constraints. 
To address this, we proposed an SDP relaxation that can be solved in polynomial time using off-the-shelf SDP solvers, combined with a rounding heuristic. 
Notably, the size of our formulation is independent of the network width. 
While previous work  has established the critical width for universal approximation of continuous functions with neural networks \citep{cai2022achieve}, we established limits on the expressivity of wide-width neural networks in terms of the data size as our critical width $R$ is based on the size of the data. 
These type of findings can help advance theoretical understanding of neural networks.

\paragraph{Limitations}
An obvious limitation of the proposed framework is its lack of scalability, a common issue for many SDP formulations in machine learning applications. 
Since the problem size scales quadratically with the number of data points, solving these formulations at the scale required for real-world neural network applications poses a significant computational challenge. 
Nevertheless, our findings offer valuable theoretical insights into ReLU networks, for instance, allowing us to interpret the network width as the (CP-)rank of the corresponding completely positive program. 

\paragraph{Future work}
We proposed an exact CP formulation, which is computationally intractable for classical computers. 
However, rapid progress in quantum computing poses a promising alternative~\citep{birdal2021quantum}. 
In particular, hybrid classical-quantum algorithms have demonstrated potential for solving specific classes of copositive programs \citep{yurtsever2022q,brown2024copositive}
as well as training binary versions of neural networks~\citep{krahn2024projected}.
Developing and implementing algorithms—even as proof-of-concept experiments—to train general ReLU networks on these platforms represents a compelling research direction. %

Over-parameterized networks are often easier to optimize using traditional methods such as SGD \citep{livni2014computational}.
Although it is possible to construct compact networks with fewer parameters that generalize comparably, compressed models are significantly more challenging to optimize in the reduced-dimensional space \citep{arora2018stronger}. 
A similar phenomenon is observed in the SDP literature: \citet{waldspurger2020rank} showed that even when an SDP has a unique rank-1 solution, factorizing the variable with a rank below a certain threshold may introduce spurious local minima. 
Given these parallels, a natural question arises: is there a corresponding bound for completely positive programs, and could this provide insights into the required width of ReLU networks? 

\clearpage
\section*{Ethics \& Reproducibility}
\vspace{-2mm}
\paragraph{Ethics}
This work adheres to the ICLR Code of Ethics, ensuring responsible use of publicly available datasets and maintaining full transparency in experimental design and reporting. The study complies with ethical guidelines on research integrity and considers potential societal impacts. Since the contributions are primarily theoretical, the focus is on advancing our understanding of the theory of artificial intelligence rather than targeting specific applications that may have direct societal impacts. 

\vspace{-1mm}
\paragraph{Reproducibility statement}
All datasets used in this work are referenced and publicly accessible.
The experimental configurations and computational environment are outlined in detail within the main text and the appendix.
We make our implementation available under {\href{https://github.com/KarthikPrakhya/SDPNN-IW}{https://github.com/KarthikPrakhya/SDPNN-IW}}.
Furthermore, all random seeds are provided to facilitate precise reproducibility.
\vspace{-2mm}
\section*{Acknowledgments}\vspace{-2mm}
AY and KP were supported by the Wallenberg AI, Autonomous Systems and Software Program (WASP) funded by the Knut and Alice Wallenberg Foundation. 
We thank the High Performance Computing Center North (HPC2N) at Umeå University for providing computational resources and valuable support during test and performance runs. 
The computations were enabled by resources provided by the National Academic Infrastructure for Supercomputing in Sweden (NAISS), partially funded by the Swedish Research Council through grant agreement no. 2022-06725. 
TB was supported by a UKRI Future Leaders Fellowship [grant number MR/Y018818/1]. 
We acknowledge the use of OpenAI’s ChatGPT for editorial assistance in preparing this manuscript.

\bibliography{iclr2025_conference}
\bibliographystyle{iclr2025_conference}

\vspace{-1mm}
\appendix
\vspace{-1mm}\section*{Appendix}\vspace{-1mm}

We will first provide additional discussions before moving onto the proofs and additional experiments.
\vspace{-3mm}
\section{Discussions}\vspace{-2mm}
\paragraph{On the contributions to theoretical understanding of neural networks}
Convex formulations of neural networks can offer significant benefits in advancing our understanding of deep networks. We now provide a brief exposition and perspective on this:
\begin{itemize}[leftmargin=*,noitemsep,topsep=0mm]
    \item \textbf{Interpretability.} Convex formulations are essential in converting pesky non-convex landscapes into interpretable ones. By doing so, they provide a clearer view of the solution space, revealing its geometric structure, and help identify properties of optimal solutions without getting trapped in suboptimal configurations.
    \item \textbf{Rigorous analysis.} They pave the way for a rigorous theoretical analysis, including guarantees of convergence, uniqueness, and stability of solutions. This is in contrast to traditional training where guarantees are often limited or conditional on specific initialization schemes or overparameterization. We leave such analysis for a future work.
    \item \textbf{Lifting.} Thanks to our exact co-positive formulation, studying how high-dimensional lifting of data affects separability and decision boundaries can offer insights into how neural networks learn representations and resolve ambiguities in the data.
    \item \textbf{A holistic learning theory.} Bridging the gaps between optimization, geometry, and learning theory is fundamental to develop a modern theory for deep networks. This is what we precisely contribute towards with our theoretical formulation.
\end{itemize}
In the light of these, one particularly compelling avenue for future work involves \textbf{leveraging our convex formulation to study generalization}. Let us elaborate on this. 
Characterizing the loss landscape of neural networks is crucial to understanding the capacity and limitations of deep learning.
Generalizing solutions are known to reside in the wide minima of the loss landscape~\citep{hochreiter1997flat}. This gives a relationship between the generalization error and the location of critical points. For our lifted convex formulations, the critical points are readily identifiable and can be mapped back to the non-convex landscape of the original problem through smooth parameterizations, as demonstrated by~\citep{levin2024effect}. By characterizing the local convexity of these points, we can potentially determine which solutions generalize better. 
While we have not yet investigated this, we plan to make such connections clearer in the future, where we further like to investigate the link between our formulation and the training dynamics~\citep{andreevatopological,birdal2021intrinsic}.\vspace{1mm}
\\
\noindent On another frontier, the exact CP-NN formulation, combined with an analog of the results by Waldspurger and Waters on the tightness of the Pataki-Barvinok (PB) bound for factorization rank in CP-NN formulations, could provide insights into the minimal width required for ReLU neural networks to succeed~\citep{waldspurger2020rank,endor2024benign}. Specifically, these works studied the MaxCut SDP and its factorized (non-convex) formulation. It is well-established that the factorized (non-convex) problem does not exhibit spurious local minima when the factorization rank exceeds the PB bound, which scales as $\mathcal{O}(\sqrt{n})$, where $n$ is the number of vertices (number of constraints, for a more general SDP template). Recently, same authors demonstrated that the PB bound is tight, meaning that even if the original SDP problem has a unique rank-1 solution, the factorized (non-convex) problem can fail unless the factorization rank is greater than the PB bound. This result provides critical \textbf{guidance for selecting the rank in matrix factorization problems}.\vspace{1mm}
\\
\noindent\textbf{On NTK, NNGP and SGD.}
NNGP models the prior predictive distribution of an untrained neural network in the infinite-width limit, while NTK captures the training dynamics of gradient descent under similar assumptions. 
Our formulation differs significantly from NTK and NNGP, and does not share the same constraints. Unlike NTK and NNGP, our convex formulation is exact and applies to both finite and infinite-width regimes as long as the critical-width threshold is satisfied. Additionally, our method is not dependent on initialization. Any empirical inexactness arises solely from the relaxation and rounding processes, which do not undermine the theoretical validity of what we present, even if the computation is NP-hard.\vspace{1mm}
\\
\noindent Neither NTK, NNGP or our work is equivalent to SGD. While NTK can approximate SGD dynamics under specific overparameterized conditions, this correspondence is not universal and relies on several assumptions. Importantly, as stated in the main paper, we use SGD with a hidden layer of 300 neurons in our experiments. Under these conditions, we do not expect SGD to align with NTK.\vspace{1mm}
\\
\noindent\textbf{On the losses and the lower bounds.}
SDP-NN is a relaxation of the original problem, meaning it provides a lower bound on the objective. Therefore, its loss value is not directly comparable to SGD, which provides an upper bound by finding a feasible (but possibly suboptimal) solution. The lower loss for SDP-NN does not mean it \emph{performs better} in the traditional sense, but rather indicates the quality of the relaxation. To evaluate the relationship between the two, we computed the approximation ratio (AR) in the main paper, quantifying how close the solution obtained by SGD is to the lower bound provided by SDP-NN. It is important to see that this is not a limitation of our method but stems from the nature of the problem – the global solution of the training being unavailable. \vspace{1mm}
\\
\noindent Let us elaborate on that a little. Specifically, the exact approximation ratio is defined as: $\mathrm{AR}_{\mathrm{exact}} = { F_{\mathrm{relaxed}}^* }/{F^*}$, where $F_{\mathrm{relaxed}}^*$ denotes the relaxed solution. However, since we do not know the true global minimum $F^*$, we instead compute the empirical AR, which provides only a lower bound on how well the SDP-NN relaxation captures the training problem: $\mathrm{AR} = {F^*_{\mathrm{relaxed}}}/{F^*_{\mathrm{SGD}}}$. Here, $F_{\mathrm{SGD}}^*$ corresponds to the local minimum found by SGD, and because $F_{\mathrm{SGD}}^* \geq F^*$, we ensure that $\mathrm{AR} \leq \mathrm{AR}_{\mathrm{exact}}$. This guarantees that the reported AR provides a lower bound on the actual approximation ratio. \vspace{1mm}
\\
\noindent\textbf{On rounding.}
Although our rounding step lacks convergence guarantees, as noted in~\Cref{rem:round}, it serves as a valuable tool for validating our SDP-NN relaxation. Despite the heuristic nature of rounding, the ability to extract the trained weights of a successful neural network demonstrates that the SDP-NN solution contains the necessary information. It is important to emphasize that the rounding step is not integral to the theoretical contributions of the proposed framework and is used solely for empirical evaluations. It is possible to leverage other rounding mechanisms and we leave it as a future study to determine the optimal choice. 

\section{Proof of~\Cref*{thm:main-1}} \label{app:proof-main-1}

Our formulation in \Cref{thm:main-1} involves a $(p \times p)$ matrix decision variable $\bm{\Lambda} \in \mathcal{PSD}$, which contains a $(q \times q)$ principal submatrix $\bm{P} \bm{\Lambda} \bm{P}^\top \in \mathcal{CP}$, where $\bm{P} = \left[ \bm{I}_{q \times q} ~~ \bm{0}_{q \times (p - q)}  \right]$ is the row-selection operator. 
The following lemma plays a crucial role in our analysis, in showing that there exists a finite critical width for a two-layer ReLU network, beyond which increasing the number of hidden neurons does not improve the network's expressive power. 

\begin{lemma}
\label{lem:cp-rank-delta-p}
    Let $\bm{\Lambda} \in \mathbb{R}^{p \times p}$ be a positive semidefinite matrix of rank $k$, with a completely positive principal submatrix $\widehat{\bm{\Lambda}} \in \mathbb{R}^{q \times q}$ of CP-rank $K$. 
    Without loss of generality, we assume $\widehat{\bm{\Lambda}}$ is the leading principal submatrix of $\bm{\Lambda}$ given by $\widehat{\bm{\Lambda}} = \bm{P}\bm{\Lambda}\bm{P}^\top$ with $\bm{P} = \left[ \bm{I}_{q \times q} ~~ \bm{0}_{q \times (p - q)}  \right]$. 
    Then, $\bm{\Lambda}$ can be decomposed as
    \begin{equation}
        \bm{\Lambda} = \sum_{j=1}^R \bm{\lambda}_j \bm{\lambda}_j^\top, \quad \mathrm{such\, that} \quad \bm{\lambda}_j \in \mathbb{R}^p ~~\mathrm{and}~~\bm{P}\bm{\lambda}_j \in \mathbb{R}_+^q
    \end{equation}
    for some $R \leq \max\{k,K\}$. 
    As an immediate corollary, for any $(p \times p)$ PSD matrix with a $(q \times q)$ dimensional CP submatrix (without any restriction on their rank or CP-rank), we have $R \leq \max\{p,R_q\}$, where $R_q$ is the maximal CP-rank $R_q := q(q+1)/2-1$. 
\end{lemma}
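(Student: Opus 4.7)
The plan is to build a factor matrix $\bm{L}' \in \mathbb{R}^{p \times R}$ with $\bm{L}'(\bm{L}')^\top = \bm{\Lambda}$ and $\bm{P}\bm{L}' \geq \bm{0}$, whose columns will be the required vectors $\bm{\lambda}_j$. The two ingredients I would combine are a rank-$k$ PSD factorization $\bm{\Lambda} = \bm{L}\bm{L}^\top$ with $\bm{L} \in \mathbb{R}^{p \times k}$, which induces a signed factorization $\widehat{\bm{L}} := \bm{P}\bm{L} \in \mathbb{R}^{q \times k}$ of $\widehat{\bm{\Lambda}}$, and a CP factorization $\widehat{\bm{\Lambda}} = \widehat{\bm{V}}\widehat{\bm{V}}^\top$ with $\widehat{\bm{V}} \in \mathbb{R}_+^{q \times K}$. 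The argument then splits according to whether $K \leq k$ or $K > k$.

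In the first case, $K \leq k$, I would target $R = k$ and pad $\widehat{\bm{V}}$ by $k - K$ zero columns to obtain a non-negative $\widehat{\bm{V}}_0 \in \mathbb{R}^{q \times k}$ of the same size as $\widehat{\bm{L}}$. Both $\widehat{\bm{L}}$ and $\widehat{\bm{V}}_0$ are now equidimensional signed factorizations of the same PSD matrix $\widehat{\bm{\Lambda}}$, so comparing their SVDs and extending the resulting partial isometry to a full orthogonal $\bm{Q} \in \mathbb{R}^{k \times k}$ produces the identity $\widehat{\bm{L}}\bm{Q} = \widehat{\bm{V}}_0$. Setting $\bm{L}' = \bm{L}\bm{Q}$ then gives $\bm{L}'(\bm{L}')^\top = \bm{\Lambda}$ by orthogonality of $\bm{Q}$, while $\bm{P}\bm{L}' = \widehat{\bm{L}}\bm{Q} = \widehat{\bm{V}}_0 \geq \bm{0}$, so the $k$ columns of $\bm{L}'$ yield the desired decomposition.

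In the second case, $K > k$, I would target $R = K$ and directly prescribe the top $q$ rows of $\bm{L}'$ to be $\widehat{\bm{V}}$, solving for a lower block $\bm{W} \in \mathbb{R}^{(p-q) \times K}$. Partitioning $\bm{\Lambda}$ into blocks $\widehat{\bm{\Lambda}}, \bm{B}, \bm{C}$, the condition $\bm{L}'(\bm{L}')^\top = \bm{\Lambda}$ reduces to $\widehat{\bm{V}}\bm{W}^\top = \bm{B}$ and $\bm{W}\bm{W}^\top = \bm{C}$. The first equation has the particular solution $\bm{W}_0^\top = \widehat{\bm{V}}^\top \widehat{\bm{\Lambda}}^\dagger \bm{B}$, since $\bm{\Lambda} \succeq \bm{0}$ forces $\mathrm{col}(\bm{B}) \subseteq \mathrm{col}(\widehat{\bm{\Lambda}}) = \mathrm{col}(\widehat{\bm{V}})$, and the residual in the second equation collapses to the Schur complement $\bm{S} = \bm{C} - \bm{B}^\top \widehat{\bm{\Lambda}}^\dagger \bm{B} \succeq \bm{0}$, of rank $k - r$ with $r = \mathrm{rank}(\widehat{\bm{\Lambda}})$. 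A correction $\bm{N}$ with $\bm{N}^\top\bm{N} = \bm{S}$ and $\widehat{\bm{V}}\bm{N} = \bm{0}$ can then be placed inside $\mathrm{null}(\widehat{\bm{V}})$, whose dimension $K - r$ is at least the rank $k - r$ of $\bm{S}$, completing the construction.

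The step I expect to be trickiest is the orthogonal alignment in Case~A, because $\bm{Q}$ must simultaneously be orthogonal (to preserve the global identity $\bm{L}\bm{L}^\top = \bm{\Lambda}$) and send $\widehat{\bm{L}}$ to the specific non-negative matrix $\widehat{\bm{V}}_0$; this requires identifying the two SVDs of $\widehat{\bm{\Lambda}}$ and lifting a partial isometry between rank-$r$ column subspaces of $\mathbb{R}^k$ to a full orthogonal matrix, which is a bit subtle when $\widehat{\bm{\Lambda}}$ is rank-deficient. Once the main claim is in place, the corollary follows at once from the universal bounds $k \leq p$ and, via \Cref{lem:cp-rank}, $K \leq R_q = q(q+1)/2 - 1$, giving $R \leq \max\{p, R_q\}$.
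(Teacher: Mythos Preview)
Your proposal is correct, but the case split is unnecessary and Case~B is more elaborate than needed. The paper's proof handles both cases uniformly with the single idea you deploy in Case~A: it pads \emph{both} factorizations with zero columns to the common width $R = \max\{k,K\}$ (so $\bm{B} \in \mathbb{R}^{p \times R}$ with $\bm{B}\bm{B}^\top = \bm{\Lambda}$ and $\bm{D} \in \mathbb{R}_+^{q \times R}$ with $\bm{D}\bm{D}^\top = \widehat{\bm{\Lambda}}$), then invokes exactly the orthogonal-alignment fact you flag as the tricky step---isolated in the paper as a separate lemma---to obtain an orthogonal $\bm{Q} \in \mathbb{R}^{R \times R}$ with $(\bm{P}\bm{B})\bm{Q} = \bm{D}$, and sets $\bm{L}' = \bm{B}\bm{Q}$. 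Your Case~B Schur-complement construction is a genuine alternative that builds the bottom block of $\bm{L}'$ directly from the prescribed top block $\widehat{\bm{V}}$ rather than rotating a global factor; it is more explicit and sidesteps the alignment lemma in that case, but buys nothing over the unified argument, which already delivers $R = \max\{k,K\}$ without the block algebra or the null-space dimension count.
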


We will use the following result (see Lemma~1 in \citep{xu2004completely}) in the proof of \Cref{lem:cp-rank-delta-p}. 

\begin{fact} \label{lem:factorization_transformation}
    If a PSD matrix $\bm{A} \in \mathbb{R}^{n \times n}$ admits two distinct factorizations, $\bm{A} = \bm{C} \bm{C}^\top$ and $\bm{A} = \bm{D} \bm{D}^\top$, where $\bm{C}, \bm{D} \in \mathbb{R}^{n \times N}$, then we can find an orthogonal matrix $\bm{Q} \in \mathbb{R}^{N \times N}$ such that $\bm{D} = \bm{C} \bm{Q}$.
\end{fact}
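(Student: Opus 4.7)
The plan is to factor both $\bm{C}$ and $\bm{D}$ through a common full-column-rank square root of $\bm{A}$, thereby reducing the problem to one about right factors with orthonormal rows, which can then be glued together by an orthogonal matrix acting on $\mathbb{R}^N$.

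First I would set $r = \mathrm{rank}(\bm{A})$ and note that $\mathrm{rank}(\bm{C}) = \mathrm{rank}(\bm{D}) = r$, since $\mathrm{rank}(\bm{X}\bm{X}^\top) = \mathrm{rank}(\bm{X})$ for any real $\bm{X}$. Using the spectral decomposition of $\bm{A}$, I would define $\bm{V} = \bm{U}\bm{\Sigma}^{1/2} \in \mathbb{R}^{n \times r}$, where $\bm{U}$ holds the eigenvectors corresponding to the $r$ positive eigenvalues of $\bm{A}$, collected in the diagonal $\bm{\Sigma}\in\mathbb{R}^{r\times r}$, so that $\bm{A} = \bm{V}\bm{V}^\top$ with $\bm{V}$ having linearly independent columns and $\bm{V}^\top\bm{V}=\bm{\Sigma}$ invertible. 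Since $\mathrm{range}(\bm{C}) = \mathrm{range}(\bm{C}\bm{C}^\top) = \mathrm{range}(\bm{A}) = \mathrm{range}(\bm{V})$, every column of $\bm{C}$ lies in the column span of $\bm{V}$, giving a unique $\bm{R}_C \in \mathbb{R}^{r \times N}$ with $\bm{C} = \bm{V}\bm{R}_C$; similarly $\bm{D} = \bm{V}\bm{R}_D$ for some $\bm{R}_D \in \mathbb{R}^{r \times N}$. Substituting into $\bm{C}\bm{C}^\top = \bm{A} = \bm{V}\bm{V}^\top$ and then left/right multiplying by $(\bm{V}^\top\bm{V})^{-1}\bm{V}^\top$ and its transpose would yield $\bm{R}_C\bm{R}_C^\top = \bm{I}_r$, and analogously $\bm{R}_D\bm{R}_D^\top = \bm{I}_r$.

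The problem now reduces to finding an orthogonal $\bm{Q}$ with $\bm{R}_C \bm{Q} = \bm{R}_D$. The rows of $\bm{R}_C$ (respectively $\bm{R}_D$) form $r$ orthonormal vectors in $\mathbb{R}^N$, so I would extend each by appending $N - r$ further orthonormal rows to obtain full $N \times N$ orthogonal matrices $\bar{\bm{R}}_C$ and $\bar{\bm{R}}_D$, which is always possible because every orthonormal set in $\mathbb{R}^N$ extends to an orthonormal basis. Setting $\bm{Q} = \bar{\bm{R}}_C^\top \bar{\bm{R}}_D$ produces an orthogonal matrix as a product of two orthogonal matrices. Writing $\bm{S} = [\bm{I}_r ~~ \bm{0}_{r\times(N-r)}]$, so that $\bm{R}_C = \bm{S}\bar{\bm{R}}_C$ and $\bm{R}_D = \bm{S}\bar{\bm{R}}_D$, a direct computation gives $\bm{C}\bm{Q} = \bm{V}\bm{R}_C \bm{Q} = \bm{V}\bm{S}\bar{\bm{R}}_C\bar{\bm{R}}_C^\top \bar{\bm{R}}_D = \bm{V}\bm{S}\bar{\bm{R}}_D = \bm{V}\bm{R}_D = \bm{D}$, which closes the argument.

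The main subtlety is the rank-deficient regime $r < N$: in that case $\bm{R}_C$ is not even square, and only $\bm{R}_C\bm{R}_C^\top = \bm{I}_r$ holds (not $\bm{R}_C^\top\bm{R}_C = \bm{I}_N$), so one cannot simply read off $\bm{Q}$ from the right factors without enlarging them. The orthogonal extension step is precisely what handles this, and the freedom in how to complete $\bar{\bm{R}}_C$ and $\bar{\bm{R}}_D$ mirrors the inherent non-uniqueness of $\bm{Q}$; in the special case $r = N$ the extension is vacuous and one recovers the simpler formula $\bm{Q} = \bm{R}_C^\top \bm{R}_D$.
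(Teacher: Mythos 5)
Your proof is correct, and every step checks out: the reduction to a full-column-rank square root $\bm{V}$ with $\bm{V}^\top\bm{V}$ invertible, the identity $\mathrm{range}(\bm{C})=\mathrm{range}(\bm{C}\bm{C}^\top)=\mathrm{range}(\bm{V})$ giving unique right factors $\bm{R}_C,\bm{R}_D$, the cancellation yielding $\bm{R}_C\bm{R}_C^\top=\bm{I}_r$, and the completion of the orthonormal rows to full orthogonal matrices all hold, and the final computation $\bm{C}\bm{Q}=\bm{D}$ is verified. Your route differs from the paper's in execution: the paper works directly with the rows $\bm{c}_i,\bm{d}_i$ of $\bm{C}$ and $\bm{D}$, observes that their Gram matrices coincide (both equal $\bm{A}$), defines a linear isometry sending $\bm{c}_i\mapsto\bm{d}_i$ on the row span, and extends it to an orthogonal matrix on $\mathbb{R}^N$; you instead route both factorizations through a canonical square root obtained from the spectral decomposition of $\bm{A}$, reducing the problem to relating two coisometries, which you then complete to orthogonal matrices explicitly. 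The paper's argument is shorter but leaves some care to the reader --- in particular, the claim that $\{\bm{c}_i\}$ and $\{\bm{d}_i\}$ ``span the same subspace'' of $\mathbb{R}^N$ is not literally true in general (only the dimensions agree), and the well-definedness of the map $\bm{c}_i\mapsto\bm{d}_i$ when the rows are linearly dependent requires the Gram-matrix equality, which the paper uses only afterwards to show the map is an isometry. Your version handles the rank-deficient case $r<N$ explicitly and avoids these subtleties at the cost of a slightly longer construction; both establish the same fact.
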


\begin{proof}[Proof of \Cref{lem:factorization_transformation}]
    Let $\bm{c}_i \in \mathbb{R}^N$ and $\bm{d}_i \in \mathbb{R}^N$ be the $i$th rows of $\bm{C}$ and $\bm{D}$, respectively. 
    Since $\{\bm{c}_i\}_{i=1}^n$ and $\{\bm{d}_i\}_{i=1}^n$ span the same subspace, there exists a linear transformation $\bm{Q}$ mapping $\mathrm{span}(\bm{c}_1, \dots, \bm{c}_n)$ to $\mathrm{span}(\bm{d}_1, \dots, \bm{d}_n)$ such that $\bm{d}_i = \bm{c}_i \bm{Q}$ for each $i = 1, \dots, n$. 
    By construction, we have $\langle \bm{d}_i, \bm{d}_j \rangle = \langle \bm{c}_i \bm{Q}, \bm{c}_j \bm{Q} \rangle = \langle \bm{c}_i, \bm{c}_j \rangle$, so $\bm{Q}$ preserves inner products on this subspace. 
    Therefore, by extending $\bm{Q}$ to all of $\mathbb{R}^N$, we obtain an orthogonal matrix.
\end{proof}

\begin{proof}[Proof of \Cref{lem:cp-rank-delta-p}]
For the PSD matrix $ \bm{\Lambda}$ and any non-negative integer $ R \geq k $, we have a factorization of the form $\bm{\Lambda} = \bm{B} \bm{B}^\top$, where $\bm{B} \in \mathbb{R}^{p \times R}$. 
This factorization is not necessarily unique, and the columns of $\bm{B}$ are not required to be linearly independent.
Similarly, for the CP matrix $\widehat{\bm{\Lambda}}$, we have a factorization $\widehat{\bm{\Lambda}} = \bm{D} \bm{D}^\top$, where $\bm{D} \in \mathbb{R}_+^{q \times R}$ for any $R \geq K$. 
As before, this factorization is not necessarily unique, and the columns of $\bm{D}$ need not be linearly independent. 

Consider these factorizations with $R=\max\{k,K\}$. 
Define $\bm{C}$ as the first $q$ rows of $\bm{B}$, given by $\bm{C} = \bm{P}\bm{B}$. 
We have $\widehat{\bm{\Lambda}} = \bm{C}\bm{C}^\top$. 
Then, by \Cref{lem:factorization_transformation}, since any CP matrix is also PSD, there exists an orthogonal matrix $\bm{Q} \in \mathbb{R}^{R \times R}$ such that $\bm{D} = \bm{C}\bm{Q}$. 
Finally, let us define $\bm{E} = \bm{B}\bm{Q}$. 
It is easy to verify that $\bm{E}\bm{E}^\top = \bm{\Lambda}$, and $\bm{P}\bm{E} = \bm{D}$ is entrywise non-negative. 
Hence, if we set $\bm{w}_j$ equal to the $j$th column of $\bm{E}$, we obtain the desired decomposition.
\end{proof}

\vspace{-1mm}
The following lemma will be used to handle the non-convexity introduced by ReLU activation:

\begin{lemma}\label{lem:relu-constraint-equiv-condition}
For real-valued vector $\hat{\bm{\alpha}} \in \mathbb{R}^n$ and non-negative real-valued vectors $\bm{\alpha}, \bm{\beta} \in \mathbb{R}_+^n$ with $\bm{\alpha} := (\hat{\bm{\alpha}})_+$ and $\bm{\beta} := \bm{\alpha} - \hat{\bm{\alpha}}$, the following conditions are equivalent:
\begin{equation} \label{eqn:lemma-3-conds}
    \bm{\alpha} = (\hat{\bm{\alpha}})_+ \quad \text{if and only if} \quad \mathrm{trace}({\bm{\alpha}} \bm{\beta}^\top ) = 0.
\end{equation}
\end{lemma}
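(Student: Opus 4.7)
The plan is to translate the trace condition into a pointwise complementarity statement and then argue on each coordinate separately. Interpreting the lemma consistently with the proof sketch of \Cref{thm:main-1}, the setup is: one is given $\hat{\bm{\alpha}} \in \mathbb{R}^n$ together with non-negative vectors $\bm{\alpha},\bm{\beta}\in\mathbb{R}_+^n$ satisfying $\hat{\bm{\alpha}}=\bm{\alpha}-\bm{\beta}$, and one wants to show that the positive/negative part decomposition $\bm{\alpha}=(\hat{\bm{\alpha}})_+$, $\bm{\beta}=(\hat{\bm{\alpha}})_+-\hat{\bm{\alpha}}$ is characterized by $\mathrm{trace}(\bm{\alpha}\bm{\beta}^\top)=0$.

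The first step is the algebraic reduction $\mathrm{trace}(\bm{\alpha}\bm{\beta}^\top)=\sum_{i=1}^n \alpha_i\beta_i$. Since every $\alpha_i,\beta_i\ge 0$, each summand is non-negative, so the sum vanishes if and only if $\alpha_i\beta_i=0$ for every $i$. This turns the trace equation into a coordinatewise complementarity condition that is much easier to manipulate.

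The forward direction is direct. Assume $\bm{\alpha}=(\hat{\bm{\alpha}})_+$, so $\alpha_i=\max(0,\hat{\alpha}_i)$; then $\beta_i=\alpha_i-\hat{\alpha}_i=\max(0,-\hat{\alpha}_i)$. A short case analysis on the sign of $\hat{\alpha}_i$ shows that exactly one of $\alpha_i,\beta_i$ is nonzero (or both are zero), hence $\alpha_i\beta_i=0$ for every $i$ and $\mathrm{trace}(\bm{\alpha}\bm{\beta}^\top)=0$. For the reverse direction, assume $\mathrm{trace}(\bm{\alpha}\bm{\beta}^\top)=0$, so $\alpha_i\beta_i=0$ for each $i$. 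Split coordinates into two cases: if $\beta_i=0$ then $\hat{\alpha}_i=\alpha_i\ge 0$, whence $(\hat{\alpha}_i)_+=\alpha_i$; if $\alpha_i=0$ then $\hat{\alpha}_i=-\beta_i\le 0$, whence $(\hat{\alpha}_i)_+=0=\alpha_i$. In either case $(\hat{\alpha}_i)_+=\alpha_i$, giving $\bm{\alpha}=(\hat{\bm{\alpha}})_+$ and consequently $\bm{\beta}=\bm{\alpha}-\hat{\bm{\alpha}}=(\hat{\bm{\alpha}})_+-\hat{\bm{\alpha}}$.

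There is no real obstacle here; the only thing worth stating carefully is the equivalence $\sum_i \alpha_i\beta_i=0\iff \alpha_i\beta_i=0$ for all $i$, which relies crucially on the non-negativity of $\bm{\alpha}$ and $\bm{\beta}$ (otherwise cancellations would break the argument). Everything else is a two-line case analysis on the sign of each $\hat{\alpha}_i$.
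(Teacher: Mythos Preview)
Your proposal is correct and follows essentially the same approach as the paper: both reduce $\mathrm{trace}(\bm{\alpha}\bm{\beta}^\top)=\sum_i \alpha_i\beta_i$, use non-negativity to conclude that the sum vanishes iff each term does, and then perform a coordinatewise case analysis (the paper splits on $\alpha_i>0$ versus $\alpha_i=0$, you split on $\beta_i=0$ versus $\alpha_i=0$, which is equivalent). Your explicit remark that the statement should be read as ``given $\bm{\alpha},\bm{\beta}\ge 0$ with $\hat{\bm{\alpha}}=\bm{\alpha}-\bm{\beta}$'' is a helpful clarification of the lemma's slightly awkward phrasing.
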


\begin{proof}[Proof of \Cref{lem:relu-constraint-equiv-condition}]
First, observe that $ 
    \mathrm{trace}(\bm{\alpha} \bm{\beta}^\top ) 
    = \mathrm{trace}(\bm{\beta}^\top \bm{\alpha} )
    = \bm{\beta}^\top \bm{\alpha}  
    = \sum_{i=1}^n \alpha_i \beta_i.$

Assume that the left-hand side in \cref{eqn:lemma-3-conds} holds. 
Note that if $\bm{\alpha}$ represents the positive part of $\hat{\bm{\alpha}}$, then $\bm{\beta}$ is the negative part. 
Consequently, for each index $i$, either $\alpha_i$ or $\beta_i$ is zero, hence $\mathrm{trace}(\bm{\alpha} \bm{\beta}^\top) = 0$.

Now, suppose that the right-hand side in \cref{eqn:lemma-3-conds} holds. 
Given that the elements of $\bm{\alpha}$ and $\bm{\beta}$ are non-negative, we have $\mathrm{trace}(\bm{\alpha} \bm{\beta}^\top ) = 0$ if and only if $\alpha_i \beta_i = \alpha_i (\alpha_i - \hat{\alpha}_i) = 0$ for all $i=1,\ldots,n$. 
This implies that if $\alpha_i > 0$, then $\hat{\alpha}_i = \alpha_i$. 
Otherwise, if $\alpha_i = 0$, we have $\hat{\alpha}_i = \alpha_i - \beta_i \leq 0$. 
Thus, we conclude that $\bm{\alpha} = (\hat{\bm{\alpha}})_+$. 
\end{proof}

\begin{proof}[Proof of~\Cref{thm:main-1}]
We start by rewriting (\ref{eqn:problem-setup}) in the constrained form as
\begin{equation} \label{eqn:problem-setup-constrained-1}
\begin{aligned}
    & \min_{\substack{\bm{u}_j \in \mathbb{R}^d, \bm{v}_j \in \mathbb{R}^c \\ \bm{\alpha}_j \in \mathbb{R}^n, \hat{\bm{\alpha}}_j \in \mathbb{R}^n}} ~~~~ \Big \| \sum_{j = 1}^m \bm{\alpha}_j \bm{v}_j^\top -  \bm{Y} \Big \|_F^2 + \frac{\gamma}{2} \sum_{j = 1}^m (\|\bm{u}_j \|_2^2 + \| \bm{v}_j\|_2^2) \\
    & ~~~ \mathrm{s.t.} ~~~~~~~ \hat{\bm{\alpha}}_j = \bm{X}\bm{u}_j \quad \text{and} \quad \bm{\alpha}_j = (\hat{\bm{\alpha}}_j)_+, ~~~~~\mathrm{for} ~~~ j = 1,\ldots,m.
\end{aligned}
\end{equation}
We introduce non-negative variables $\bm{\beta}_j := \bm{\alpha}_j - \hat{\bm{\alpha}}_j$.
As both $\bm{\alpha}_j$ and $\bm{\beta}_j$ are non-negative valued, it can be shown that $\bm{\alpha}_j = (\hat{\bm{\alpha}}_j)_+$ if and only if $\mathrm{trace}({\bm{\alpha}_j} \bm{\beta}_j^\top ) = 0$, see \Cref{lem:relu-constraint-equiv-condition}. 
As a result, the constraints in \cref{eqn:problem-setup-constrained-1} can be equivalently rewritten in terms of $\bm{\alpha}_j$ and $\bm{\beta}_j$ as: 
\begin{equation} \label{eqn:equivalent-trace-constraint}
    \bm{\alpha}_j, \bm{\beta_j} \in \R_+^n, \quad
    \mathrm{trace}({\bm{\alpha}_j} \bm{\beta}_j^\top ) = 0, 
    \quad \text{and} \quad 
    \bm{X}\bm{u}_j - \bm{\alpha}_j + \bm{\beta}_j = \bm{0}_{n \times 1},
    ~~~~~\mathrm{for} ~~~ j = 1,\ldots,m.
\end{equation}

The quadratic terms in the objective function in \cref{eqn:problem-setup-constrained-1} and the trace constraint in \cref{eqn:equivalent-trace-constraint} introduce challenges due to non-convexity. 
We address this by lifting the problem into a higher-dimensional space, introducing new variables $\bm{\lambda}_j \in \mathbb{R}^p$ and $\bm{\Lambda}_j \in \mathbb{R}^{p \times p}$, where $p = 2n + c + d$:
\begin{equation}
\begin{aligned}
    \bm{\lambda}_j \!:=\! \begin{bmatrix} \bm{\alpha}_j \\[0.35em] \bm{\beta}_j \\[0.35em] \bm{u}_j \\[0.35em] \bm{v}_j \end{bmatrix}
    ~~~\text{and}~~~
    \bm{\Lambda}_j \!:=\! \begin{bmatrix} \bm{\Lambda}_{\bm{\alpha}_j\bm{\alpha}_j^\top} & \bm{\Lambda}_{\bm{\alpha}_j\bm{\beta}_j^\top} & \bm{\Lambda}_{\bm{\alpha}_j\bm{u}_j^\top} & \bm{\Lambda}_{\bm{\alpha}_j\bm{v}_j^\top}  \\[0.25em] \bm{\Lambda}_{\bm{\beta}_j\bm{\alpha}_j^\top} & \bm{\Lambda}_{\bm{\beta}_j\bm{\beta}_j^\top} & \bm{\Lambda}_{\bm{\beta}_j\bm{u}_j^\top} & \bm{\Lambda}_{\bm{\beta}_j\bm{v}_j^\top} \\[0.25em]  \bm{\Lambda}_{\bm{u}_j\bm{\alpha}_j^\top} & \bm{\Lambda}_{\bm{u}_j\bm{\beta}_j^\top} & \bm{\Lambda}_{\bm{u}_j\bm{u}_j^\top} & \bm{\Lambda}_{\bm{u}_j\bm{v}_j^\top} \\[0.25em]
    \bm{\Lambda}_{\bm{v}_j\bm{\alpha}_j^\top} & \bm{\Lambda}_{\bm{v}_j\bm{\beta}_j^\top} & \bm{\Lambda}_{\bm{v}_j\bm{u}_j^\top} & \bm{\Lambda}_{\bm{v}_j\bm{v}_j^\top} 
    \end{bmatrix} 
    ~~~\text{such that}~~~
    \bm{\Lambda}_j = \bm{\lambda}_j \bm{\lambda}_j^\top. 
\end{aligned}
\end{equation}
This allows us to express the objective as a convex function of $\bm{\Lambda}_j$:
\begin{equation}
     \Big \|  \sum_{j = 1}^m \bm{P}_\alpha \bm{\Lambda}_j \bm{P}_v^\top - \bm{Y} \Big \|_F^2 + \frac{\gamma}{2} \sum_{j = 1}^m \langle \bm{P}_u^\top \bm{P}_u + \bm{P}_v^\top \bm{P}_v, \bm{\Lambda}_j \rangle.
\end{equation}
Similarly, we can reformulate the constraints in \cref{eqn:equivalent-trace-constraint} as
\begin{equation}
    \bm{P}_{\alpha\beta}\bm{\lambda}_j \geq 0, \quad 
    \mathrm{trace}(\bm{P}_\alpha\bm{\Lambda}_j\bm{P}_\beta^\top) = 0, 
    \quad \text{and} \quad 
    \bm{M}\bm{\lambda}_j = \bm{0}_{n \times 1},
    ~~~~~\mathrm{for} ~~~ j = 1,\ldots,m.
\end{equation}

So far, we have shown that (\ref{eqn:problem-setup}) is equivalent to the following problem:
\begin{equation}\label{eqn:problem-setup-lifted-equivalent}
\tag{\textsc{Nn}-Train-Lifted}
\begin{aligned}
    & \min_{\bm{\lambda}_j  \in \mathbb{R}^p} \,~ \Big \|\sum_{j = 1}^m \bm{P}_\alpha \bm{\Lambda}_j \bm{P}_v^\top - \bm{Y} \Big \|_F^2 + \frac{\gamma}{2} \sum_{j = 1}^m \langle \bm{P}_u^\top \bm{P}_u + \bm{P}_v^\top \bm{P}_v, \bm{\Lambda}_j \rangle \\[0.25em]
    & ~~~ \mathrm{s.t.} ~~~~~
    \mathrm{for} ~ j = 1,\ldots,m: \\[0.25em]
    & ~~~~~~~~~~~~~~~~~~\, \bm{M} \bm{\lambda}_j = \bm{0}_{n \times 1} \\
    & ~~~~~~~~~~~~~~~~~~\, \mathrm{trace}(\bm{P}_\alpha\bm{\Lambda}_j\bm{P}_\beta^\top) = 0\\
    & ~~~~~~~~~~~~~~~~~~\, \bm{\Lambda}_j = \bm{\lambda}_j\bm{\lambda}_j^\top \quad \text{and}\quad \bm{P}_{\alpha\beta}\bm{\lambda}_j \geq \bm{0}_{2n \times 1}.
\end{aligned}
\end{equation}

Next, we will show that (\ref{eqn:cp-nn}) is also equivalent to (\ref{eqn:problem-setup-lifted-equivalent}) when $m$ is sufficiently large. 
Since $\bm{\Lambda} \in \mathcal{PSD}$, it follows that $\mathrm{trace}(\bm{M}\bm{\Lambda}\bm{M}^\top) = 0$ and $\mathrm{trace}(\bm{P}_\alpha\bm{\Lambda}\bm{P}_\beta^\top) = 0$ is equivalent to $\mathrm{trace}(\bm{P}_\alpha \bm{\Lambda} \bm{P}_\beta^\top) + \mathrm{trace}(\bm{M}\bm{\Lambda}\bm{M}^\top)= 0$.
By \Cref{lem:cp-rank-delta-p}, we can find a positive integer $R$ such that any matrix $\bm{\Lambda} \in \mathcal{PSD}$ such that $\bm{P}_{\alpha\beta} \bm{\Lambda} \bm{P}_{\alpha\beta}^\top \in \mathcal{CP}$ can be factorized as 
\begin{equation}
    \bm{\Lambda} = \sum_{j = 1}^R \bm{\lambda}_j \bm{\lambda}_j^\top ~~~ \text{for some $\bm{\lambda}_j \in \R^p ~~~$such that$~~~ \bm{P}_{\alpha\beta} \bm{\lambda}_j \geq \bm{0}_{2n \times 1}$.}
\end{equation}
Given this decomposition, we can write
\begin{equation}
    \mathrm{trace}(\bm{M}\bm{\Lambda}\bm{M}^\top) 
    = \sum_{j = 1}^R \mathrm{trace}\left (\bm{M} \bm{\lambda}_j \bm{\lambda}_j^\top \bm{M}^\top \right) \\
    = \sum_{j = 1}^R \|\bm{M}\bm{\lambda}_j\|^2.
\end{equation}
Therefore, $\mathrm{trace}(\bm{M}\bm{\Lambda}\bm{M}^\top) = 0$ if and only if $\bm{M}\bm{\lambda}_j = \bm{0}_{n \times 1}$ for all $r = 1,\ldots,R$. 
Similarly, 
\begin{align}
    \mathrm{trace}(\bm{P}_\alpha \bm{\Lambda} \bm{P}_\beta^\top) 
    = \sum_{j = 1}^R \mathrm{trace} \left( \bm{P}_\alpha \bm{\lambda}_j \bm{\lambda}_j^\top \bm{P}_\beta^\top \right) 
    = \sum_{j = 1}^R \ip{\bm{P}_\alpha \bm{\lambda}_j}{\bm{P}_\beta \bm{\lambda}_j}. 
\end{align}
Note that $\ip{\bm{P}_\alpha \bm{\lambda}_j}{\bm{P}_\beta \bm{\lambda}_j} \geq 0$ for all $j$ since both $\bm{P}_\alpha \bm{\lambda}_j$ and $\bm{P}_\beta \bm{\lambda}_j$ are non-negative valued. 
As a result, $\mathrm{trace}(\bm{P}_\alpha \bm{\Lambda} \bm{P}_\beta^\top) = 0$ if and only if $\mathrm{trace} \left( \bm{P}_\alpha \bm{\lambda}_j \bm{\lambda}_j^\top \bm{P}_\beta^\top \right) = \ip{\bm{P}_\alpha \bm{\lambda}_j}{\bm{P}_\beta \bm{\lambda}_j} = 0$ for all $j = 1,\ldots,R$.
Therefore, the formulation in (\ref{eqn:cp-nn}) is equivalent to the following:
\begin{equation}
\begin{aligned}
    & \min_{\bm{\lambda}_j  \in \mathbb{R}^p} \,~ \Big \|  \sum_{j = 1}^R \bm{P}_\alpha \bm{\Lambda}_j \bm{P}_v^\top -  \bm{Y} \Big \|_F^2 + \frac{\gamma}{2} \sum_{j = 1}^R \langle \bm{P}_u^\top \bm{P}_u + \bm{P}_v^\top \bm{P}_v, \bm{\Lambda}_j \rangle \\[0.25em]
    & ~~~ \mathrm{s.t.} ~~~~~
    \mathrm{for} ~ j = 1,\ldots,R: \\[0.25em]
    & ~~~~~~~~~~~~~~~~~~\, \bm{M} \bm{\lambda}_j = \bm{0}_{n \times 1} \\
    & ~~~~~~~~~~~~~~~~~~\, \mathrm{trace}(\bm{P}_\alpha\bm{\Lambda}_j\bm{P}_\beta^\top) = 0\\
    & ~~~~~~~~~~~~~~~~~~\, \bm{\Lambda}_j = \bm{\lambda}_j\bm{\lambda}_j^\top \quad \text{and}\quad \bm{P}_{\alpha\beta}\bm{\lambda}_j \geq \bm{0}_{2n \times 1}.
\end{aligned}
\end{equation}
This concludes the proof.
\end{proof}

\vspace{-1em}

\section{Proof of~\Cref*{thm:main-2}} \label{app:proof-main-2}
We first introduce the following lemmas that will be required for stating the proof of~\Cref{thm:main-2}. 

\begin{lemma}\label{lem:cp-rank-delta-p-int}
    Let $\bm{\Lambda} \in \mathbb{R}^{p \times p}$ be a positive semidefinite matrix of rank $k$, with a completely positive principal submatrix $\widehat{\bm{\Lambda}} \in \mathbb{R}^{q \times q}$ of CP-rank $K$. 
    Without loss of generality, we assume $\widehat{\bm{\Lambda}}$ is the leading principal submatrix of $\bm{\Lambda}$ given by $\widehat{\bm{\Lambda}} = \bm{P}\bm{\Lambda}\bm{P}^\top$ with $\bm{P} = \left[ \bm{I}_{q \times q} ~~ \bm{0}_{q \times (p - q)}  \right]$. 
    Then, there exists a probability measure $\nu$ and a corresponding random variable $\bm{\lambda}$ such that $\bm{\Lambda}$ can be decomposed as 
\begin{equation}
    \bm{\Lambda} = \mathbb{E}_\nu[\bm{\lambda} \bm{\lambda}^\top] = \!\int_{\mathbb{R}^p}\!\! \bm{\lambda} \bm{\lambda}^\top d\nu(\bm{\lambda}),
\end{equation}  
such that $\bm{\lambda} \in \mathbb{R}^p$ and $\bm{P}\bm{\lambda} \geq \bm{0}$ a.s. with respect to $\nu$.
\end{lemma}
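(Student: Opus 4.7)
The approach is to reduce to the already-established finite-dimensional factorization in \Cref{lem:cp-rank-delta-p} and then repackage its conclusion as an expectation with respect to a discrete probability measure. Since the probability-measure language is only introduced to align with the infinite-width training problem (\ref{eqn:problem-setup-infinite-width}), no genuinely new decomposition theorem is required; the content is essentially bookkeeping.

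First, I would invoke \Cref{lem:cp-rank-delta-p} to obtain a finite integer $R \leq \max\{p, q(q+1)/2 - 1\}$ and vectors $\bm{\lambda}_1,\ldots,\bm{\lambda}_R \in \mathbb{R}^p$ with $\bm{P}\bm{\lambda}_j \geq \bm{0}_{q \times 1}$ for every $j$ such that $\bm{\Lambda} = \sum_{j=1}^R \bm{\lambda}_j \bm{\lambda}_j^\top$. Without loss of generality I would discard any zero vectors from this list, so that all remaining $\bm{\lambda}_j$ are non-zero; this does not change the sum.

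Next, I would explicitly construct the measure. Define atoms $\bm{\mu}_j := \sqrt{R}\,\bm{\lambda}_j \in \mathbb{R}^p$ and weights $w_j := 1/R$, and set $\nu := \sum_{j=1}^R w_j\, \delta_{\bm{\mu}_j}$, which is a probability measure on $\mathbb{R}^p$. Since the cone $\{\bm{\lambda} \in \mathbb{R}^p : \bm{P}\bm{\lambda} \geq \bm{0}_{q \times 1}\}$ is invariant under positive scaling, each atom satisfies $\bm{P}\bm{\mu}_j \geq \bm{0}_{q \times 1}$, so $\bm{P}\bm{\lambda} \geq \bm{0}_{q \times 1}$ holds $\nu$-almost surely. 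A direct computation then gives
\begin{equation*}
    \int_{\mathbb{R}^p} \bm{\lambda}\bm{\lambda}^\top\, d\nu(\bm{\lambda}) = \sum_{j=1}^R w_j\, \bm{\mu}_j \bm{\mu}_j^\top = \sum_{j=1}^R \tfrac{1}{R}\cdot R\, \bm{\lambda}_j \bm{\lambda}_j^\top = \bm{\Lambda},
\end{equation*}
which is exactly the claimed identity.

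There is no substantive obstacle in this proof: all the geometric content is inherited from \Cref{lem:cp-rank-delta-p}. The only delicate point is choosing the rescaling so that $\nu$ is simultaneously a probability measure and reproduces $\bm{\Lambda}$; the choice $w_j = 1/R$ with $\bm{\mu}_j = \sqrt{R}\,\bm{\lambda}_j$ is the simplest option, but any positive weights $w_j$ summing to one together with atoms $\bm{\mu}_j = \bm{\lambda}_j/\sqrt{w_j}$ would serve equally well. The utility of the lemma is that it packages the finite decomposition into the measure-theoretic form required by \Cref{thm:main-2}, thereby bridging the finite- and infinite-width regimes.
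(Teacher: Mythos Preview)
Your construction is correct and matches the paper's forward argument: both invoke \Cref{lem:cp-rank-delta-p}, rescale the resulting factors, and place a discrete probability measure on the rescaled atoms. The paper uses general coefficients $\sigma_j$ with $\sum_j \sigma_j^2 = 1$ rather than your uniform choice $1/R$, but that is cosmetic.

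The one substantive difference is that the paper's proof actually establishes more than the lemma statement. It introduces the sets
\[
\Theta := \{\bm{\Lambda} \in \mathcal{PSD} : \bm{P}\bm{\Lambda}\bm{P}^\top \in \mathcal{CP}\}
\quad\text{and}\quad
\Gamma := \Big\{\textstyle\int \bm{\lambda}\bm{\lambda}^\top\, d\nu(\bm{\lambda}) : \bm{P}\bm{\lambda} \geq \bm{0}~\text{a.s.}\Big\},
\]
and proves $\Theta = \Gamma$. Your argument (and the lemma as literally stated) only gives $\Theta \subseteq \Gamma$. The paper obtains the reverse inclusion by applying Jensen's inequality to the convex indicator $I_\Theta$, using that $\bm{\lambda}\bm{\lambda}^\top \in \Theta$ whenever $\bm{P}\bm{\lambda} \geq \bm{0}$. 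That converse is what ensures an \emph{arbitrary} feasible measure $\nu$ in (\ref{eqn:problem-setup-infinite-width}) produces a $\bm{\Lambda}$ feasible for (\ref{eqn:cp-nn}), and it is tacitly used in the proof of \Cref{thm:main-2}. Your proof is complete for the lemma as written, but be aware that the set equality $\Theta = \Gamma$ is what actually closes the loop downstream.
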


\begin{proof}[Proof of \Cref{lem:cp-rank-delta-p-int}]
Let us define the set $\Theta$ as  follows:
\begin{equation}
    \Theta := \{ \bm{\Lambda} \in \mathbb{R}^{p \times p} : \bm{\Lambda} \in \mathcal{PSD}, ~~ \bm{P}\bm{\Lambda}\bm{P}^\top \! \in \mathcal{CP} \}. 
\end{equation}
By \Cref{lem:cp-rank-delta-p}, we can find a positive integer $R$ such that any matrix $\bm{W} \in \Theta$ can be factorized as 
\begin{equation}
    \bm{\Lambda} = \sum_{j = 1}^R \bm{\lambda}_j \bm{\lambda}_j^\top ~~~ \text{for some $\bm{\lambda}_j \in \R^p ~~~$ such that $ ~~~ \bm{P} \bm{\lambda}_j \geq \bm{0}$}.
\end{equation}
Thus, by performing a scaling: $\bm{\lambda}_j \rightarrow \sigma_j \bm{\lambda}_j$ for some $\sigma_j \in [0,1]$ such that $\sum_{j = 1}^R \sigma_j^2 = 1$, we can equivalently express $\Theta$ as follows:
\begin{equation}
    \Theta := \Bigg\{ 
    \sum_{j = 1}^R \sigma_j^2 \bm{\lambda}_j \bm{\lambda}_j^\top : ~\bm{\lambda}_j \in \mathbb{R}^p, ~\bm{P} \bm{\lambda}_j \geq \bm{0},~\sigma_j \in [0, 1],~\forall j = 1, \ldots, R, ~~\text{and}~~ \sum_{j = 1}^R \sigma_j^2 = 1 
    \Bigg\} .
\end{equation}
We also define the set $\Gamma$ as follows:
\begin{equation}
    \Gamma = \left \{ \int_{\mathbb{R}^p} \!\! \bm{\lambda}\bm{\lambda}^\top d\nu(\bm{\lambda}) ~:~ \nu \mbox{ is a probability measure over } \bm{\lambda},~
    \bm{P}\bm{\lambda} \geq \bm{0}~\text{a.s. with respect to $\nu$}\right \}.
\end{equation}
It is clear that $\Theta \subseteq \Gamma$, because any element of $\Theta$, expressed as $\sum_{j = 1}^R \sigma_j^2 \bm{\lambda}_j \bm{\lambda}_j^\top$, can be represented by the measure $\sum_{j = 1}^R \sigma_j^2 \delta_{\bm{\lambda}_j\bm{\lambda}_j^\top}$, where $\delta_{\bm{\lambda}_j\bm{\lambda}_j^\top}$ is the Dirac delta measure centered at $\bm{\lambda}_j\bm{\lambda}_j^\top$. 
The Dirac delta measure $\delta_x$ has support at the point $x$, assigning probability $1$ to sets containing $x$ and $0$ otherwise.

To prove the opposite direction, that $\Gamma \subseteq \Theta$, we define an indicator function $I_{\Theta}(\cdot)$ given by 
\begin{equation}
    I_{\Theta}(\bm{\Lambda}) = \begin{cases}
        0 & \text{if } \bm{\Lambda} \in \Theta \\
        +\infty & \text{if } \bm{\Lambda} \not\in \Theta
    \end{cases}
\end{equation}
Since $\Theta$ is a closed convex set, it is clear that $I_{\Theta}$ is a closed convex function. 

Let $\bm{\Lambda} = \int_{\mathbb{R}_p} \bm{\lambda}\bm{\lambda}^\top d\nu(\bm{\lambda}) \in \Gamma$ for some probability measure $\nu$. 
Applying Jensen's inequality gives the following: 
\begin{equation}
    I_{\Theta}(\bm{\Lambda}) = I_{\Theta}\left ( \int_{\mathbb{R}_p} \bm{\lambda}\bm{\lambda}^\top d\nu(\bm{\lambda})\right ) \leq \int_{\mathbb{R}^p} I_{\Theta}(\bm{\lambda}\bm{\lambda}^\top) d\nu(\bm{\lambda}) = 0,
\end{equation}
so $\bm{\Lambda} \in \Theta$. This completes the proof.
\end{proof}

We recall the following standard result from measure theory without proof, as it will be used in the subsequent analysis:

\begin{fact}\label{lem:zero-integrand}
Let $f$ be an absolutely integrable function and $\nu$ be a measure defined on the measure space $(\mathbb{R}^p, \mathcal{B}^p)$.
If $f \geq 0$ almost surely, then $\int_{\mathbb{R}^p} f(\bm{x}) d\nu(\bm{x}) = 0$ if and only if $f = 0$ almost surely. 
\end{fact}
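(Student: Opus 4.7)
The statement is a classical measure-theoretic result, and the plan is to prove the two directions of the equivalence separately, with the nontrivial content concentrated in the forward implication.

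For the ($\Leftarrow$) direction I would argue routinely from the definition of the Lebesgue integral: if $f = 0$ almost surely, let $N$ be a null set containing $\{x : f(x) \neq 0\}$, and split the integral over $N$ and $N^c$. The contribution from $N$ vanishes because $\nu(N) = 0$ (a standard property of integration against measures, since $f$ is absolutely integrable), and the contribution from $N^c$ is zero because the integrand is identically zero there.

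The ($\Rightarrow$) direction is where the real work lies, though the argument is also standard. Assuming $f \geq 0$ almost surely and $\int_{\mathbb{R}^p} f \, d\nu = 0$, I want to show $\nu(\{x : f(x) > 0\}) = 0$. The plan is to use a level-set (Markov-type) decomposition: write $E := \{x : f(x) > 0\}$ as the countable union $E = \bigcup_{n=1}^\infty E_n$ with $E_n := \{x : f(x) \geq 1/n\}$. Each $E_n$ is measurable since $f$ is measurable. The key estimate is
\[
0 \;=\; \int_{\mathbb{R}^p} f \, d\nu \;\geq\; \int_{E_n} f \, d\nu \;\geq\; \frac{1}{n}\,\nu(E_n),
\]
which forces $\nu(E_n) = 0$ for every $n$. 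Countable subadditivity of $\nu$ then gives $\nu(E) \leq \sum_n \nu(E_n) = 0$. Combined with $f \geq 0$ almost surely, this yields $f = 0$ almost surely, completing the argument.

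There is essentially no substantive obstacle — this is exactly Markov's (Chebyshev's) inequality applied to each level set, pieced together by countable subadditivity. The only technical points worth flagging are: (i) ``absolutely integrable'' is needed to ensure $\int f \, d\nu$ is a finite real number, so that the equality to zero is meaningful and the middle inequality above holds in the usual sense; and (ii) the a.s.\ qualifier on $f \geq 0$ can be absorbed by replacing $f$ with $\max(f,0)$, which does not change the integral and enables the monotonicity step $\int f \, d\nu \geq \int_{E_n} f \, d\nu$ without caveat.
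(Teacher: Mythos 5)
Your proof is correct: the ($\Leftarrow$) direction is immediate, and your ($\Rightarrow$) argument via the level sets $E_n = \{x : f(x) \geq 1/n\}$, the Markov-type bound $0 = \int f\,d\nu \geq \tfrac{1}{n}\nu(E_n)$, and countable subadditivity is the canonical proof of this fact; your remarks on finiteness of the integral and on absorbing the a.s.\ qualifier by passing to $\max(f,0)$ are also sound. Note, however, that the paper explicitly states this as a standard result from measure theory \emph{without proof}, so there is no paper argument to compare against --- your write-up simply supplies the standard textbook justification the authors chose to omit.
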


\begin{proof}[Proof of~\Cref{thm:main-2}]
We begin by rewriting (\ref{eqn:problem-setup-infinite-width}) in constrained form:
\begin{equation}\label{problem-setup-iw-constrained-1}
\begin{aligned}
    & \min_{\nu:\mathcal{B}^{d + c} \rightarrow [0, 1]} \left \| \int_{\mathbb{R}^d \times \mathbb{R}^c} \bm{s} \bm{v}^\top d\nu(\bm{u}, \bm{v}) - \bm{Y} \right \|_F^2  + \frac{\gamma}{2} \int_{\mathbb{R}^d \times \mathbb{R}^c} (\|\bm{u} \|_2^2 + \| \bm{v}\|_2^2) d\nu(\bm{u}, \bm{v}) \\[0.25em]
    & ~~~ \mathrm{s.t.} ~~~~~~~ \hat{\bm{s}}(\bm{u}, \bm{v}) = \bm{X}\bm{u}  \\ 
    & ~~~~~~~~~~~~~~\, \bm{s}(\bm{u}, \bm{v}) = (\hat{\bm{s}}(\bm{u}, \bm{v}))_+.
\end{aligned}
\end{equation}
We introduce non-negative variables $\bm{\alpha}(\bm{u}, \bm{v}), \bm{\beta}(\bm{u}, \bm{v}) \in \mathbb{R}_+^n$ that splits $\bm{s}$ and $\hat{\bm{s}}$ as follows:
\begin{equation}
    \bm{\alpha}(\bm{u}, \bm{v}) = \bm{s}(\bm{u}, \bm{v}) \quad \text{and} \quad
    \bm{\beta}(\bm{u}, \bm{v}) = \bm{s}(\bm{u}, \bm{v}) - \hat{\bm{s}}(\bm{u}, \bm{v}).
\end{equation}
For the sake of brevity, we will use the following shorthand notation: $\bm{\alpha} := \bm{\alpha}(\bm{u}, \bm{v})$ and $\bm{\beta} := \bm{\beta}(\bm{u}, \bm{v})$. 
Using the equivalence described in \Cref{lem:relu-constraint-equiv-condition}, we can reformulate \cref{problem-setup-iw-constrained-1} as follows:
\begin{equation}\label{eqn:problem-setup-iw-constrained-2}
\begin{aligned}
    & \min_{\nu:\mathcal{B}^{d + c} \rightarrow [0, 1]} \left \|\int_{\mathbb{R}^d \times \mathbb{R}^c} \bm{\alpha} \bm{v}^\top d\nu(\bm{u}, \bm{v}) -  \bm{Y} \right  \|_F^2  + \frac{\gamma}{2} \int_{\mathbb{R}^d \times \mathbb{R}^c} (\|\bm{u} \|_2^2 + \| \bm{v}\|_2^2) d\nu(\bm{u}, \bm{v}) \\[0.25em]
    & ~~~~ \mathrm{s.t.} ~~~~~ \bm{X}\bm{u} - \bm{\alpha} + \bm{\beta} = \bm{0}_{n \times 1} %
    \\[0.25em]
    & ~~~~~~~~~~~~~~\, \mathrm{diag}({\bm{\alpha}} \bm{\beta}^\top ) = \bm{0}_{n \times 1}.
\end{aligned}
\end{equation}
Let $\nu(\bm{u}, \bm{v})$ be a probability measure defined on the support of the variables $\bm{u}$ and $\bm{v}$. Then $\nu(\bm{u}, \bm{v})$ represents a feasible solution to (\ref{eqn:problem-setup-infinite-width}).
Define the random vector $\bm{\lambda} := \bm{\lambda}(\bm{u}, \bm{v}) \in \mathbb{R}^p$ with $p = 2n + c + d$ as $\bm{\lambda} := \bm{\lambda}(\bm{u}, \bm{v}) := [\bm{\alpha}^\top~~\bm{\beta}^\top~~\bm{u}^\top~~\bm{v}^\top]^\top$.
Also, let $\bm{\Lambda} := \mathbb{E}_\nu[\bm{\lambda}\bm{\lambda}^\top]$, where the expectation is with respect to the probability measure $\nu(\bm{u}, \bm{v})$.
We can represent the objective function in \cref{eqn:problem-setup-iw-constrained-2} as follows:
\begin{equation}
     \left \|  \mathbb{E}_\nu[\bm{\alpha}\bm{v}^\top] - \bm{Y}\right \|_F^2 + \frac{\gamma}{2} \mathbb{E}_\nu[\mathrm{Tr}(\bm{u}\bm{u}^\top + \bm{v}\bm{v}^\top)],
\end{equation}
or equivalently, in terms of $\bm{\Lambda}$, as:
\begin{equation}\label{eqn:lifted-form-obj}
    \left \| \bm{P}_\alpha \bm{\Lambda} \bm{P}_v^\top -  \bm{Y}\right \|_F^2 
 + \frac{\gamma}{2} \langle \bm{P}_u^\top \bm{P}_u + \bm{P}_v^\top \bm{P}_v, \bm{\Lambda} \rangle,
\end{equation}
which is exactly the objective function in (\ref{eqn:cp-nn}).

Then, similar to the proof of \Cref{thm:main-1}, we can reformulate the constraints in \cref{eqn:problem-setup-iw-constrained-2} as 
\begin{equation}\label{eqn:lifted-form-constraints}
    \bm{P}_{\alpha\beta}\bm{\lambda} \geq \bm{0}_{2n \times 1}, \quad 
    \langle \bm{P}_\alpha\bm{\lambda}, \bm{P}_\beta\bm{\lambda} \rangle = 0, 
    \quad \text{and} \quad 
    \bm{M}\bm{\lambda} = \bm{0}_{n \times 1}. 
\end{equation}
Next, we will show that we can replace these with
\begin{equation} \label{eqn:constraints-IW-lifted}
    \mathrm{trace}(\bm{M}\bm{\Lambda}\bm{M}^\top) = 0, 
    ~~~ \mathrm{trace}(\bm{P}_\alpha\bm{\Lambda}\bm{P}_\beta^\top) = 0, 
    ~~~ \bm{\Lambda} \in \mathcal{PSD},
    ~~~ \text{and} 
    ~~~  \bm{P}_{\alpha\beta} \bm{\Lambda} \bm{P}_{\alpha\beta}^\top \in \mathcal{CP},
\end{equation}
which is exactly the feasible set of (\ref{eqn:cp-nn}). 

By \Cref{lem:cp-rank-delta-p-int}, we can find a probability measure $\nu$ and a corresponding random variable $\bm{\lambda}$ such that any matrix $\bm{\Lambda} \in \mathcal{PSD}$ such that $\bm{P}_{\alpha\beta} \bm{\Lambda} \bm{P}_{\alpha\beta}^\top \in \mathcal{CP}$ can be factorized as 
\begin{equation}\label{eqn:int-decomposition-Lambda}
    \bm{\Lambda} = \mathbb{E}_\nu [\bm{\lambda} \bm{\lambda}^\top] = \int_{\mathbb{R}^p} \bm{\lambda} \bm{\lambda}^\top d\nu(\bm{\lambda}), 
    \quad 
    \text{such that $\quad \bm{P}_{\alpha\beta}\bm{\lambda} \geq \bm{0}_{2n \times 1}~$ a.s. with respect to $\nu$.}
\end{equation}
Given this decomposition, we can show that $\mathrm{trace}(\bm{M}\bm{\Lambda}\bm{M}^\top) = 0$ if and only if $\bm{M}\bm{\lambda} = \bm{0}_{n \times 1}$ a.s., by using \Cref{lem:zero-integrand}, since
\begin{equation}
    \mathrm{trace}(\bm{M}\bm{\Lambda}\bm{M}^\top) 
    = \int_{\mathbb{R}^p} \mathrm{trace}(\bm{M}\bm{\lambda}\bm{\lambda}^\top\bm{M}^\top) \, d\nu(\bm{\lambda})
    = \int_{\mathbb{R}^p} \| \bm{M}\bm{\lambda} \|^2 \, d\nu(\bm{\lambda}). 
\end{equation}
Similarly, we can show that $\mathrm{trace}(\bm{P}_\alpha\bm{\Lambda}\bm{P}_\beta^\top) = 0$ if and only if  $\langle \bm{P}_\alpha\bm{\lambda}, \bm{P}_\beta \bm{\lambda} \rangle = 0$ a.s., since
\begin{equation}
    \mathrm{trace}(\bm{P}_\alpha\bm{\Lambda}\bm{P}_\beta^\top) 
    = \int_{\mathbb{R}^p} \mathrm{trace}(\bm{P}_\alpha\bm{\lambda}\bm{\lambda}^\top\bm{P}_\beta^\top) \, d\nu(\bm{\lambda})
    = \int_{\mathbb{R}^p} \langle \bm{P}_\alpha \bm{\lambda}, \bm{P}_\beta \bm{\lambda} \rangle \, d\nu(\bm{\lambda}),
\end{equation}
and $\langle \bm{P}_\alpha \bm{\lambda}, \bm{P}_\beta \bm{\lambda} \rangle \geq 0$ a.s. since $\bm{P}_\alpha \bm{\lambda} \geq \bm{0}_{n \times 1}$ and $\bm{P}_\beta \bm{\lambda} \geq \bm{0}_{n \times 1}$ a.s. by \cref{eqn:int-decomposition-Lambda}. 
Therefore, the constraints in \cref{eqn:lifted-form-constraints} imply those in \cref{eqn:constraints-IW-lifted}, and the constraints in \cref{eqn:constraints-IW-lifted} imply \cref{eqn:lifted-form-constraints} almost surely. 
However, since the objective in \cref{eqn:lifted-form-obj} depends only on $\bm{\Lambda} = \mathbb{E}_\nu [\bm{\lambda} \bm{\lambda}^\top]$, and sets of measure zero do not affect $\bm{\Lambda}$, we can omit the almost surely condition on the constraints without affecting the solution. 
\end{proof}
\vspace{-2mm}
\section{Interpretation of the Solution to CP-NN}\label{sec:infinite-width-interpretation}\vspace{-3mm}
The solution to (\ref{eqn:cp-nn}) can be interpreted as a moment matrix over the weights and activations with respect to the representing measure for the IWNN.
Based on the factorization in \cref{eqn:int-decomposition-Lambda}, the solution to (\ref{eqn:cp-nn}) is $\bm{\Lambda} = \mathbb{E}_\nu[\bm{\lambda}\bm{\lambda}^\top]$ for some measure $\nu$ and corresponding random variable $\bm{\lambda}$:
\vspace{-2mm}
\begin{equation}
    \bm{\Lambda} = \mathbb{E}_{\nu}\begin{bmatrix} \bm{\alpha}\bm{\alpha}^\top & \bm{\alpha}\bm{\beta}^\top & \bm{\alpha}\bm{u}^\top & \bm{\alpha}\bm{v}^\top  \\[0.25em] \bm{\beta}\bm{\alpha}^\top & \bm{\beta}\bm{\beta}^\top & \bm{\beta}\bm{u}^\top & \bm{\beta}\bm{v}^\top \\[0.25em]  \bm{u}\bm{\alpha}^\top & \bm{u}\bm{\beta}^\top & \bm{u}\bm{u}^\top & \bm{u}\bm{v}^\top \\[0.25em]
        \bm{v}\bm{\alpha}^\top & \bm{v}\bm{\beta}^\top & \bm{v}\bm{u}^\top & \bm{v}\bm{v}^\top 
    \end{bmatrix}
\end{equation}
Suppose we solve (\ref{eqn:cp-nn}) to obtain the solution $\bm{\Lambda}$ and consider the submatrix $\mathbb{E}_\nu[\bm{u}\bm{u}^\top]$. 
Using this, we can compute the expected kernel matrix $\mathbf{K}[\bm{X}_0, \bm{X}_1]$ over the measure $\nu$, where $\bm{X}_0 \in \mathbb{R}^{n_0 \times d}$ and $\mathbf{X}_1 \in \mathbb{R}^{n_1\times d}$ are two input datasets of size $n_0$ and $n_1$, as $
    \mathbf{K}[\bm{X}_0, \bm{X}_1] = \bm{X}_0\mathbb{E}_\nu[\bm{u}\bm{u}^\top]\bm{X}_1^\top = \mathbb{E}_\nu[(\bm{X}_0\mathbf{u})(\bm{X}_1 \bm{u})^\top]$.
It is thus possible to extract the expected kernel matrix of the infinite-width (or wide) neural network from the solution of (\ref{eqn:cp-nn}), which stores the learned features.

\vspace{-5mm}
\section{Additional Details on Numerical Experiments} \label{sec:training-loss-curves-sgd}

We used SGD with small step sizes and trained until convergence. We observed that this approach provides a better approximation of the global solution. Based on this observation, we tuned the learning rate (LR) and the number of iterations for each dataset as:
\begin{itemize}[noitemsep,leftmargin=1em, topsep=0em]
    \item \textit{Random}: Initial LR = $10^{-5}$; \# of iterations = 500K.
    \item \textit{Spiral}: Initial LR = $10^{-3}$; \# of iterations = 8K.
    \item \textit{Iris}: Initial LR = $10^{-6}$; \# of iterations = 2M.
    \item \textit{Ionosphere}: Initial LR = $10^{-6}$; \# of iterations = 2M for $\gamma = 0.1$ and 5M for $\gamma = 0.01$.
    \item \textit{Pima Indians Diabetes}: Initial LR = $10^{-8}$; \# of iterations = 5M for $\gamma = 0.1$ and 6M for $\gamma = 0.01$.
    \item \textit{Bank Notes Authentication}: Initial LR = $10^{-6}$; \# of iterations = 5M.
    \item \textit{MNIST}: Initial LR = $10^{-7}$; \# of iterations = 8M.
\end{itemize}
The SGD training curves for \textit{Random} and \textit{Spiral} with varying number of hidden neurons, ranging from 5 to 300, are shown in \Cref{fig:sgd-curves}. 

\vspace{4em}
\begin{figure}[ht]
    \centering
    \begin{subfigure}{0.48\textwidth}
        \centering
        \includegraphics[width=\textwidth]{./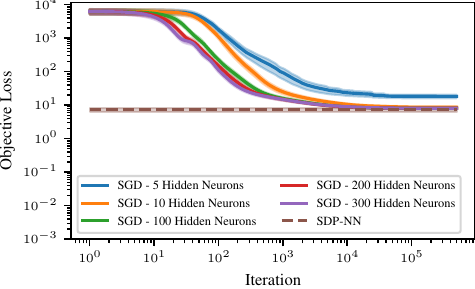}\vspace{-1mm}
        \caption{Random Dataset, $\gamma = 0.1$.}
        \label{fig:randomized_results_reg_0.1}
    \end{subfigure}
    \hfill
    \begin{subfigure}{0.48\textwidth}
        \centering
        \includegraphics[width=\textwidth]{./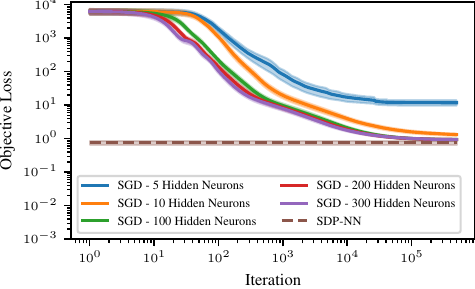}\vspace{-1mm}
        \caption{Random Dataset, $\gamma = 0.01$.}
        \label{fig:randomized_results_reg_0.01}
    \end{subfigure}
    
    \vspace{1.5em}
    
    \begin{subfigure}{0.48\textwidth}
        \centering
        \includegraphics[width=\textwidth]{./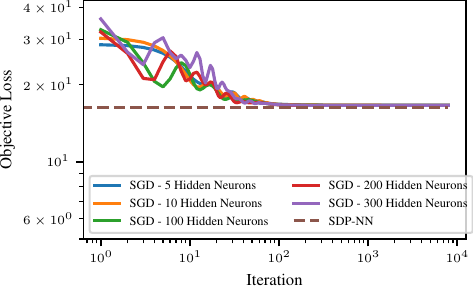}\vspace{-1mm}
        \caption{Spiral Dataset, $\gamma = 0.1$.}
        \label{fig:spiral_results_reg_0.1}
    \end{subfigure}
    \hfill
    \begin{subfigure}{0.48\textwidth}
        \centering
        \includegraphics[width=\textwidth]{./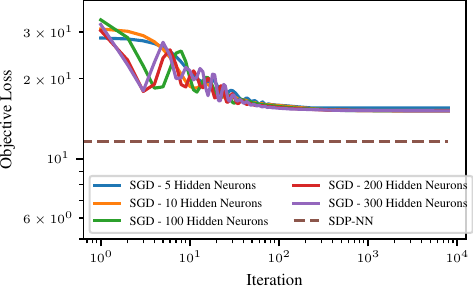}\vspace{-1mm}
        \caption{Spiral Dataset, $\gamma = 0.01$.}
        \label{fig:spiral_results_reg_0.01}
    \end{subfigure}
    \vspace{1em}
    \caption{Objective value (training loss) for \textit{Random} and \textit{Spiral} datasets trained with SGD using different numbers of hidden neurons. The dashed line indicates the lower bound obtained from solving SDP-NN with the interior point method as a reference.}
    \label{fig:sgd-curves}
\end{figure}

\vspace{1em}

\clearpage
\noindent\textbf{Comparison with other convex relaxations.}
We compared our SDP-NN approach against two convex formulations proposed in \citep{sahiner2020vector}. 
The first is the convex semi-infinite bi-dual formulation (Equation (14) in their paper), solved using a Frank-Wolfe type algorithm, which we refer to as `Sahiner FW.' 
The second is their copositive relaxation of the problem (Equation (19) in their paper), which we refer to as `Sahiner CP,' solved with CVXPY. 
We used their publicly available code for these methods. 
The comparison was conducted on the Random dataset for regression and the Spiral dataset for classification. 
The results are shown in \Cref{tab:sahiner-comparison-randomized} for the regression and \Cref{tab:sahiner-comparison-spiral} for the classification. 
Since the provided implementation of `Sahiner CP' was designed specifically for classification, only `Sahiner FW' is included in \Cref{tab:sahiner-comparison-randomized}. 

It is important to note that the training objectives obtained by these methods are not directly comparable to SDP-NN. 
In particular, both `Sahiner FW' and `Sahiner CP' require enumerating all possible sign patterns for a given network width (300 hidden neurons in this experiment) and training dataset. 
As a result, they approximate the training solution with 300 hidden neurons accurately, as seen in the tables, with slightly higher objective values than SGD. 
In contrast, SDP-NN is a relaxation for an (in)finite-width network (and, as can be easily verified, the critical width $R$ in these examples can be larger than 300), hence provides a lower-bound on the global optimal value. 
It is also worth noting that the epigraph level-set parameter $t$ in Sahiner FW method, which theoretically should be optimized using bisection, is instead directly set using the SGD solution for simplicity. 
Including bisection would further increase the runtime for Sahiner FW. 

\vspace{3.5em}
\begin{table}[h]
\centering
\caption{Comparison of convex formulations for regression with the \textit{Random} dataset.}
\label{tab:sahiner-comparison-randomized}
\setlength{\tabcolsep}{5pt}
\begin{tabular}{lcccccc}
\toprule
Algorithm & \multicolumn{3}{c}{$\gamma = 0.1$} & \multicolumn{3}{c}{$\gamma = 0.01$} \\
\cmidrule(lr){2-4} \cmidrule(lr){5-7}
& Train Obj. & Std Dev. & Time (s) & Train Obj. & Std Dev. & Time (s) \\
\midrule
SDP-NN & 7.275 & 0.976 & 11.35 & 0.755 & 0.099 & 12.47 \\
Sahiner FW & 8.213 & 1.083 & 1341.8 & 1.099 & 0.159 & 1024.8 \\
SGD & 8.090 & 1.041 & - & 0.936 & 0.116 & - \\
\midrule
\multicolumn{7}{l}{\footnotesize *Sahiner FW and SGD are applied with 300 hidden neurons.} \\ \multicolumn{7}{l}{\footnotesize \phantom{*}Train objective and time are averaged over 100 random dataset.}
\end{tabular}
\end{table}

\vspace{3.5em}
\begin{table}[h]
\centering
\caption{Comparison of convex formulations for classification with the \textit{Spiral} dataset.}
\label{tab:sahiner-comparison-spiral}
\setlength{\tabcolsep}{11pt}
\begin{tabular}{lcccc}
\toprule
Algorithm & \multicolumn{2}{c}{$\gamma = 0.1$} & \multicolumn{2}{c}{$\gamma = 0.01$} \\
\cmidrule(lr){2-3} \cmidrule(lr){4-5}
& Train Obj. & Time (s) & Train Obj. & Time (s) \\
\midrule
SDP-NN & 16.235 & 1724.05 & 11.658 & 1062.81 \\
Sahiner FW & 16.593 & 1885.60 & 15.184 & 1911.77 \\
Sahiner CP & 16.660 & 4.75 & 15.853 & 5.07 \\
SGD & 16.593 & - & 15.159 & - \\
\midrule
\multicolumn{5}{l}{\footnotesize *Sahiner FW, Sahiner CP and SGD are applied with 300 hidden neurons.}
\end{tabular}
\end{table}

\end{document}